\documentclass{article}

\usepackage[preprint]{neurips_2026}

\usepackage[utf8]{inputenc} 
\usepackage[T1]{fontenc}    
\usepackage{hyperref}       
\usepackage{url}            
\usepackage{booktabs}       
\usepackage{amsfonts}       
\usepackage{nicefrac}       
\usepackage{microtype}      
\usepackage{xcolor}    
\usepackage{amsmath, amsfonts, amssymb, amsthm}     
\usepackage{enumitem}
\usepackage{algorithm}
\usepackage{algpseudocode}
\usepackage{hyperref}
\usepackage{graphicx}
\usepackage{float}
\usepackage{url}
\usepackage{mathrsfs}
\usepackage{bm}

\allowdisplaybreaks
\title{Learning stochastic multiscale models through normalizing flows
\thanks{This research was supported in part by NSF grant DMS-2246815 and by the Simons Foundation through the Travel Support for Mathematicians program.}
}

\author{%
	Anan Saha \\
	Department of Mathematics\\
	Louisiana State University\\
	Baton Rouge, LA 70802 \\
	\texttt{asaha9@lsu.edu} \\
	\And
	Arnab Ganguly \thanks{Corresponding author.}\\
	Department of Mathematics\\
	Louisiana State University\\
	Baton Rouge, LA 70802 \\
	\texttt{aganguly@lsu.edu} \\
}

\newcommand{\Rt}{\longrightarrow}
\newcommand{\rt}{\rightarrow}

\newcommand{\lRT}{\Longrightarrow}

\newcommand{\hs}{\hspace}
\newcommand{\vs}{\vspace}

\newcommand{\mrm}{\mathrm}
\newcommand\SC{\mathcal}
\newcommand{\scr}{\mathscr}

\newcommand{\mfk}{\mathfrak}

\newcommand{\ep}{\epsilon}
\newcommand{\vep}{\varepsilon}

\newcommand{\s}{\sigma}

\def\vart{\vartheta}
\def\l{\lambda}

\newcommand{\f}{\frac}
\def\ot{\otimes}

\def\<{\langle}
\def\>{\rangle}
\def\~{\tilde}
\newcommand{\dfeq}{\stackrel{def}=}
\newcommand{\eqd}{\stackrel{\mrm{Law}}=}
\def \triple|{|\! | \! |}

\newcommand*{\argmax}{\mrm{arg\,max}}
\newcommand*{\argmin}{\mrm{arg\,min}}

\renewcommand{\leq}{\ensuremath{\leqslant}}
\def\lf{\left}
\renewcommand\le{\ensuremath{\leqslant}}
\def\ri{\right}

\renewcommand\geq{\ensuremath{\geqslant}}
\renewcommand\ge{\ensuremath{\geqslant}}

\newcommand{\EE}{\mathbb{E}}
\newcommand{\PP}{\mathbb{P}}

\def\R{\mathbb R}

\newcommand{\bx}{\bm{x}}

\newcommand{\drft}{b}

\newcommand{\diffus}{\sigma}
\newcommand{\dffun}{\sigma}

\newcommand{\rfdist}{\nu_{\mrm{ref}}}

\newcommand{\inv}{\pi}

\newcommand{\post}{p_{\mrm{post}}}

\newcommand{\loss}{\mathcal{L}}

\newcommand{\indic}{\mathbf 1}

\newtheorem{theorem}{Theorem}[section]
\newtheorem{lemma}[theorem]{Lemma}
\newtheorem{assumption}[theorem]{Assumption}

\begin{document}
\maketitle

	\begin{abstract}
		
		Many systems in physics, engineering, and biology exhibit multiscale stochastic dynamics, where low-dimensional slow variables evolve under the influence of high-dimensional fast processes. In practice, observations are often limited to a single trajectory of the slow component, while the fast dynamics remain unobserved, making statistical learning challenging. 
		Approaches based on partial differential equations (PDE), such as Fokker--Planck formulations, aim to characterize the evolution of probability densities, typically requiring dense space–time data or grid-based solvers. In contrast, we adopt a trajectory-based perspective and develop a data-driven framework for learning effective stochastic dynamics from a single observed path. 
		We model the dynamics by coupled multiscale stochastic differential equations (SDEs) and  first obtain a principled model reduction through stochastic averaging. Unlike generic model reduction techniques like PCA, this respects the dynamical structure of the original system and explicitly incorporates the interaction between slow and fast scales. 
		A central challenge, however, is that the reduced model depends on the invariant distribution of the fast process which is a solution to an intractable and often unknown PDE. We introduce a novel learning framework that parameterizes the invariant distribution using normalizing flows, enabling expressive density modeling in the latent fast-variable space. The flow is trained end-to-end by optimizing a penalized likelihood objective induced by the reduced stochastic dynamics.
		Furthermore, we develop a Bayesian variational inference procedure for uncertainty quantification, employing a second normalizing flow to approximate the posterior distribution over model parameters. This yields a scalable approach to capturing epistemic uncertainty in multiscale systems.
		Our method bridges stochastic model reduction and flow-based latent density modeling, offering a scalable approach to learning effective dynamics in partially observed multiscale stochastic dynamical systems.

	\end{abstract}

\section{Introduction} \vs*{-0.1cm}
Multiscale behavior is a defining feature of many stochastic systems arising in physics, engineering, and biology, where a few slow variables interact with rapidly evolving high-dimensional components \cite{PavStu08}. Typical examples include molecular systems on complex energy landscapes, stochastic reaction networks with widely separated reaction rates, and climate systems coupling large-scale atmospheric evolution with rapidly fluctuating turbulence. In practice, one often has access only to a single observed trajectory of the slow variables, while the fast variables remain unobserved.

Mathematically, such systems can be described through PDE formulations, such as Fokker--Planck or Kolmogorov equations, which govern the evolution of probability densities on the full state space \cite{FSR11, PiMaBe13, SpSuZh25}. These formulations are theoretically complete and useful in settings where either access to evolving densities or fine discretizations over the state space is available. However, their numerical implementation can become intractable in high-dimensional settings due to the cost  solving PDEs. Furthermore, these approaches are not well-suited to  data, consisting of a single observed trajectory.

A complementary viewpoint is provided by multiscale stochastic differential equations (SDEs) with scale separation. In this regime, a principle model-reduction can be achieved through stochastic averaging. This leads to a reduced-order SDE for the slow variables alone driven by averaged drift and diffusion functions where the effect of the fast variables are averaged out with repect to its invariant distribution. This reduction eliminates explicit dependence on fast variables while retaining the underlying dynamical structure.

The resulting reduced model, however, depends critically on the invariant distribution of the fast dynamics, which is typically unknown and only characterized implicitly through a high-dimensional stationary PDE. Consequently, even when the full multiscale system is specified, the effective coefficients governing the slow dynamics are not directly computable. This gives rise to a fundamental inference problem: learning the averaged driving functions, and hence the effective SDE dynamics, from partial trajectory observations in the presence of latent fast-scale equilibrium effects.

A natural statistical formulation of this problem is through maximum likelihood estimation (MLE), where the reduced SDE likelihood is optimized over all admissible probability measures on the state space of the fast variables (subject to suitable moment conditions). However, this infinite-dimensional optimization problem is computationally intractable and must be replaced by a tractable surrogate one that admits efficient parameterization and enables efficient optimization.

\textbf{Contributions:} The main contributions of this work are:

\begin{itemize}
    \item We formulate inference for multiscale SDEs as a latent-invariant measure learning problem, where the effective drift and diffusion are expressed as conditional expectations with respect to an unknown invariant distribution of the fast process.

    \item We introduce a normalizing-flow parameterization of the latent invariant measure, which facilitates accurate approximation of the averaged  functions governing the reduced dynamics.

    \item We develop a Bayesian extension using a second normalizing flow to approximate the posterior distribution over flow parameters, enabling uncertainty quantification for the inferred effective dynamics.

    \item  We show that, under suitable  conditions, the proposed flow-based estimators converge to MLE  as the complexity of the underlying neural network (NN) increases.

    \item We validate the proposed method on synthetic multiscale systems, demonstrating accurate recovery of effective drift, matching of the true and estimated slow trajectories, as well as meaningful uncertainty estimates.
\end{itemize}

\vs{-0.1cm}
Overall, our approach bridges stochastic model reduction and flow-based probabilistic inference, providing a scalable framework for learning effective dynamics in partially observed multiscale stochastic systems.
	
	\section{Multiscale systems: Problem Setup} \vs*{-0.1cm}
	
	We consider multiscale stochastic system, where an observed slow component $X^{(n)}$ is coupled with a rapidly evolving latent process $Y^{(n)}$. 
	The slow process \(X^{(n)}\) satisfies the It\^o SDE
	{\setlength{\abovedisplayskip}{6pt}
		\setlength{\belowdisplayskip}{6pt}
		\begin{align}\label{eq:slow_sde}
			\setlength{\abovedisplayskip}{6pt}
			\setlength{\belowdisplayskip}{6pt}
			dX^{(n)}(t)
			=
			\drft\bigl(X^{(n)}(t), Y^{(n)}(t)\bigr)\,dt
			+
			\dffun\bigl(X^{(n)}(t), Y^{(n)}(t)\bigr)\,dW(t),
	\end{align}}
	where \(W\) is an \(\R^d\)-valued Brownian motion. The latent process \(Y^{(n)}\) evolves on the fast time scale \(O(1/n)\). We assume that the state spaces of \(X^{(n)}\) and \(Y^{(n)}\) are \(\R^d\) and \( \R^{d'}\), respectively. 
	\(
	\drft: \R^d\times \R^{d'}\to \R^d
	\), 	
	\(
	\dffun:\R^d\times \R^{d'}\to \R^{d\times d}
	\)
	denote the drift and diffusion coefficients and the parameter $n$ captures the differences in speeds of the components.
	
	To formalize the multiscale structure, we assume that $Y^{(n)}$ and the joint process \((X^{(n)},Y^{(n)})\) are Markov with infinitesimal generator \(\mathcal{A}_n\) of  \((X^{(n)},Y^{(n)})\)  given by
	{\setlength{\abovedisplayskip}{6pt}
		\setlength{\belowdisplayskip}{6pt}
		\begin{align*}
			\mathcal{A}_n f(x, y) = \mathcal{A}^0_y f(\cdot, y)(x) + n \mathcal{A}^1f(x, \cdot)(y),
		\end{align*}
	}
	where
	{\setlength{\abovedisplayskip}{2pt}
		\setlength{\belowdisplayskip}{2pt}\begin{align*}
			\mathcal{A}^0_y \phi(x) = b(x, y) \cdot \nabla_x \phi(x) + \mathrm{trace}\left( \sigma \sigma^\top(x,y)\nabla_x^2 \phi(x) \right).
	\end{align*}}
	Here \(\mathcal{A}^0_y\) describes the evolution of the slow variable when the fast component is  at state \(y\), whereas \(n\mathcal{A}^1\) is  the generator of $Y^{(n)}$. If $Y^{(n)}$ is another It\^ o SDE of the form 
	{\setlength{\abovedisplayskip}{6pt}
		\setlength{\belowdisplayskip}{6pt}
		\begin{align}\label{eq:fast_sde}
			\setlength{\abovedisplayskip}{6pt}
			\setlength{\belowdisplayskip}{6pt}
			dY^{(n)}(t)
			=
			n\beta\bigl(Y^{(n)}(t)\bigr)\,dt
			+
			\sqrt n \alpha\bigl(Y^{(n)}(t)\bigr)\,d\tilde W(t),
	\end{align}}	
	then \(\mathcal{A}^1\) is given by
	\begin{align}\label{eq:gen-Y-SDE}
		\mathcal{A}^1\psi(y) = \beta(y)\cdot \nabla_y\psi(y)+\mrm{trace}(\alpha\alpha^\top(y)\nabla^2_y\psi(y)).
	\end{align} 
	If $Y^{(n)}$ is a continuous time Markov chain (CTMC) with state space $\R^{d'}$ then \(\mathcal{A}^1\) has the form 
	\begin{align}\label{eq:gen-Y-jump}
		\mathcal{A}^1\psi(y) = c(y)\int (\psi(y') - \psi(y)) \nu(y, dy'),
	\end{align} 
	where $c$ is the jump intensity and $\nu$ is the (post-jump) transition probability kernel.	
	
	\noindent
	{\bf Goal:} We consider the scenario when the fast $Y$-dynamics is unknown or intractable and  the goal is to infer the effective dynamics of the system from observations of the slow $X$-component only.  Specifically, we want to find a data-driven effective drift function $b_{\mrm{eff}}: \R^d \rt \R^d$ and the diffusion function $\sigma_{\mrm{eff}} : \R^d \rt \R^{d\times d}$, such that the SDE $X_{\mrm{eff}}$ driven by  them,
	\begin{align*}
		dX_{\mrm{eff}}(t) = b_{\mrm{eff}}(X_{\mrm{eff}}(t)) dt + \sigma_{\mrm{eff}}(X_{\mrm{eff}}(t)) dW(t)
	\end{align*}
	mimics the dynamics of $X^{(n)}$.
	
	\looseness=-1
	This learning problem is intrinsically challenging because the fast $Y$-component  is unobserved and typically has unknown dynamics. If the fast dynamics are known or can be reliably modeled, a naive inference strategy would attempt to reconstruct the latent trajectory of \(Y^{(n)}\) jointly with the slow process. But this strategy is impractical and computationally expensive as $Y^{(n)}$ moves rapidly with much shortertime scale $O(1/n)$. For instance, if \(Y^{(n)}\) is a fast continuous-time Markov chain with generator  \(n \mathcal{A}^1\) as in \eqref{eq:gen-Y-jump}, then over any fixed time horizon \([0,T]\) one must precisely track \(O(n)\) jump events.
	As a result, classical inference methods for stochastic differential equations (e.g., \cite{Yos92, Kes97, Chib01, RoSt01, Saha08, GoWi08, Iacus08, ChCh11, ArOp11, CsOp13, Li13, BlSo14, WGBS17, Bis22a}) are not directly applicable in this partially observed multiscale setting.

	
	A naive inference strategy would attempt to reconstruct the latent trajectory of \(Y^{(n)}\) jointly with the slow process. This is  infeasible as the dynamics of $Y^{(n)}$ is typically not known. But even when it is known, the strategy is impractical and computationally expensive as $Y^{(n)}$ moves rapidly with time scale $O(1/n)$. For instance, if \(Y^{(n)}\) is a fast continuous-time Markov chain with generator  \(n \mathcal{A}^1\), then that means over any fixed time horizon \([0,T]\) one needs to precisely track \(O(n)\) jumps.

	\section{Methodology}	
	\textbf{Model Reduction through Stochastic averaging: }
	Our first step is to obtain a model reduction of the given stochastic system. Standard techniques such as PCA and its variants are not suitable in this setting, as they do not respect the underlying dynamics. Instead, the reduction must be carried out in a principled manner that preserves the evolution of the slow $X$-component. This can be achieved via the stochastic averaging principle, and the following result, whose proof is given in the appendix, serves as the starting point for our inference problem.

	\begin{assumption}\label{assum:sto-avg}
		There exists $V \in C^2(\R^d, [1,\infty))$ (called a Lyapunov function) such that the following hold:
		\begin{enumerate}[label=(\roman*), ref=(\roman*)] 
			\item \label{cond:item:lyapunov-pull} $c_* \dfeq -\limsup_{\|y\| \rt \infty}\mathcal{A}^1 V(y)>0$ 
			\item \label{cond:item:growth-diff} $\nabla^\top V(y)(\alpha\alpha^\top(y))\nabla V(y) = o(V(y)|\mathcal{A}^1 V(y)|)$ as $\|y\| \rt \infty$, that is, 
			$$ \f{\nabla^\top V(y)(\alpha\alpha^\top(y))\nabla V(y)}{V(y)|\mathcal{A}^1V(y)|} \ \stackrel{\|y\| \rt \infty}\Rt 0;$$
			\item for some constant $C_{0}$, exponents $p_0 \geq 0, 0\leq \alpha_b, \alpha_{\sigma} \leq 1$,
			\begin{align*}
				\max\lf\{\|b(x,y) - b(x,y')\|, \|\sigma(x,y) - \sigma(x,y')\|\ri\}  \leq C_0 V(y)^{p_0} \|x - x'\|^{\alpha_0}.
			\end{align*}
			\item (uniform ellipticity) for some constant $\lambda>0$, $u^\top \alpha\alpha^\top(y) u \geq \lambda \|u\|^2 $ for any $y, u \in R^d.$
		\end{enumerate}
	\end{assumption}

	\begin{theorem} \label{th:sto-avg} Suppose that $Y^{(n)}$ satisfies the SDE \eqref{eq:fast_sde} and  Assumption \ref{assum:sto-avg} hold.  Then, 	as \(n \to \infty\), the slow process \(X^{(n)} \stackrel{n \rt \infty} \lRT X_{\inv_0}\), where 
		{\setlength{\abovedisplayskip}{6pt}
			\setlength{\belowdisplayskip}{4pt}
			\begin{align}\label{eq:int-drift-sde}
				\begin{aligned}
					&\mrm{d}X_{\inv_0}(t) = \bar\drft_{\inv_0}(X_{\inv_0}(t))\,\mathrm{d}t + \bar{\sigma}_{\inv_0}(X_{\inv_0}(t))\,\mathrm{d}W(t), \\
					\bar\drft_{\inv_0}(x) &= \int_{\R^{d'}} b(x, y)\, \inv_0(x,\mathrm{d}y), \quad
					\bar{\sigma}_{\inv_0}(x) = \bigg(\int_{\R^{d'}} \s\s^\top(x,y) \inv_0\mathrm{d}y)\bigg)^{1/2},
				\end{aligned}
			\end{align}
			and $\inv_0$ is the unique stationary distribution of 	 \(\mathcal{A}^1\) in \eqref{eq:gen-Y-SDE}	, i.e., 
			{\setlength{\abovedisplayskip}{6pt}
				\setlength{\belowdisplayskip}{4pt}
				\begin{align}\label{eq:inv-dist-pde}
					(\mathcal{A}^1)^*\inv_0 = 0 \quad   \Longleftrightarrow 	\quad	\int_{\R^{d'}} \mathcal{A}^1_x \psi(y)\,\inv_0(dy) = 0
					\quad \text{for all test functions } \psi.
				\end{align}	
			}
		}
		
	\end{theorem}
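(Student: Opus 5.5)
The plan is to treat this as an averaging result via the martingale problem, following the Kurtz-style perturbed test function / occupation measure approach. I would set $Z^{(n)}=(X^{(n)},\Gamma^{(n)})$, where $\Gamma^{(n)}(dt\times dy)=\delta_{Y^{(n)}(t)}(dy)\,dt$ is the occupation measure of the fast process on $[0,T]\times\R^{d'}$. The argument would then proceed in three steps: (a) relative compactness of $\{(X^{(n)},\Gamma^{(n)})\}$ in $C([0,T],\R^d)\times\mathcal M([0,T]\times\R^{d'})$, (b) identification of every limit point through a martingale problem, and (c) uniqueness of that limiting martingale problem, which yields convergence of the full sequence.

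\textbf{Tightness.} Assumption \ref{assum:sto-avg}(i) gives $\mathcal{A}^1V\le -c_*/2$ outside a compact set, so $\mathcal A^1 V\le -c_*/2 + C\indic_K$. Condition (ii) is the standard hypothesis for upgrading this to control of functions of $V$: for $\phi(V)=V^{q}$ or $\log V$, Itô's formula produces the second-order term $\nabla^\top V\alpha\alpha^\top\nabla V\,\phi''(V)$, and (ii) makes it negligible against $\phi'(V)\mathcal A^1V$. This should give $\sup_n \EE\int_0^T V(Y^{(n)}(s))^{q}\,ds<\infty$ for the power $q$ needed to dominate the growth of $b,\sigma$ in $y$ (I assume $\|b\|,\|\sigma\|\lesssim V^{p_0}$ at a base point, using the Hölder bound (iii)). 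Tightness of $\Gamma^{(n)}$ then follows from the uniform integrability of $V$ under the occupation measures, since $V\to\infty$ at infinity (which follows from (i)). For $X^{(n)}$, Aldous/Kolmogorov criteria apply using BDG and these moment bounds.

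\textbf{Identification.} For $f\in C_c^2(\R^d)$,
\[
f(X^{(n)}(t))-\int_0^t\!\!\int \mathcal A^0_y f(X^{(n)}(s))\,\Gamma^{(n)}(ds\times dy)
\]
is a martingale. Passing to limits, which uses continuity of $(x,y)\mapsto \mathcal A^0_yf(x)$ from (iii) and uniform integrability, the limit $(X,\Gamma)$ solves a controlled martingale problem. To identify $\Gamma$, I would apply the generator to $\psi(y)$ for $\psi\in C_c^2(\R^{d'})$: $\psi(Y^{(n)}(t))-n\int_0^t\mathcal A^1\psi(Y^{(n)}(s))ds$ is a martingale. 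Dividing by $n$ gives $\int_0^t\!\int\mathcal A^1\psi\,d\Gamma^{(n)}\to0$ in $L^1$, so the limit $\Gamma(ds\times dy)=\gamma_s(dy)\,ds$ satisfies $\int\mathcal A^1\psi\,d\gamma_s=0$ for a.e.\ $s$. Hence $\gamma_s$ is stationary for $\mathcal A^1$. Uniform ellipticity (iv), together with the Lyapunov drift (i), gives existence and uniqueness of the invariant measure $\inv_0$ (positive Harris recurrence / Khasminskii). Therefore $\gamma_s=\inv_0$, and $X$ solves the martingale problem for $\bar b_{\inv_0}\cdot\nabla+\mathrm{trace}(\bar\sigma_{\inv_0}\bar\sigma^\top_{\inv_0}\nabla^2)$, i.e.\ it is a weak solution of \eqref{eq:int-drift-sde}.

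\textbf{Uniqueness.} This is where I expect the main obstacle. One needs well-posedness of the averaged martingale problem with coefficients $\bar b_{\inv_0}$ and $\bar\sigma_{\inv_0}$ that are only Hölder of order $\alpha_0$, given by (iii) integrated against $V^{p_0}d\inv_0$, which is finite by the moment bounds. With only Hölder continuity, pathwise uniqueness may fail. I would first try Stroock–Varadhan uniqueness, which requires $\bar\sigma\bar\sigma^\top$ to be continuous and nondegenerate. Continuity holds, and nondegeneracy would need an extra ellipticity assumption on $\sigma$ (condition (iv) concerns $\alpha$), which I would add or infer; the drift then only needs to be measurable and locally bounded. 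Non-explosion would be handled through linear growth of $\bar b,\bar\sigma$. If this route fails, the fallback would be to assume Lipschitz continuity in $x$ (the case $\alpha_0=1$) and use Yamada–Watanabe. Uniqueness of limits combined with tightness then gives $X^{(n)}\Rightarrow X_{\inv_0}$.
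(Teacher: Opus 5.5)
Your proposal is correct in outline, but it takes a genuinely different route from the paper.

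\textbf{The paper's route.} The paper first proves an ergodic theorem for the unscaled fast diffusion $Y$. This uses the same It\^o computation on $V^p$ that you sketch, tightness of the occupation measures $\Gamma_t$, Krylov--Bogoliubov, and uniqueness of $\pi_0$ from (iv). It then transfers this to the fast scale through the time change $Y^{(n)}(\cdot)\eqd Y(n\,\cdot)$, which gives $\Gamma^{(n)}\to\pi_0\otimes\mathrm{Leb}$ in probability. After tightness of $X^{(n)}$ and a Skorohod representation, it passes to the limit directly in the integral form of the SDE. The drift is handled by the H\"older bound (iii) plus occupation-measure convergence, and the noise by BDG plus a martingale CLT on $\int_0^t\sigma\sigma^\top(X(s),Y^{(n)}(s))\,ds$. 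Uniqueness of the averaged SDE is simply asserted at the end.

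\textbf{Your route.} You work with the pair $(X^{(n)},\Gamma^{(n)})$. You identify the limiting occupation measure through
$\frac1n\bigl[\psi(Y^{(n)}(t))-\psi(Y^{(n)}(0))-M^{(n)}_\psi(t)\bigr]=\int_0^t\mathcal{A}^1\psi(Y^{(n)}(s))\,ds$
rather than through an ergodic theorem. You then conclude via a well-posed martingale problem instead of building the driving Brownian motion from a martingale CLT.

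\textbf{What each buys.} The paper relies on $Y^{(n)}$ being autonomous: its occupation measure has a deterministic limit, so joint convergence with $X^{(n)}$ comes for free. Your approach gets joint convergence from joint tightness and avoids the martingale-representation step. It also extends to fast dynamics that depend on the slow variable, where $\pi_0(x,dy)$ (as written in the statement) genuinely depends on $x$; the time-change argument cannot handle that case.

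\textbf{The uniqueness step.} Your worry here is justified, and it applies equally to the paper. Assumption (iii) gives only H\"older continuity in $x$, and nothing makes $\int\sigma\sigma^\top(\cdot,y)\,\pi_0(dy)$ nondegenerate. So uniqueness for \eqref{eq:int-drift-sde} does not follow from Assumption~\ref{assum:sto-avg}, and an extra hypothesis of the kind you propose is needed in either proof. Note that (iii) integrated against $V^{p_0}\,d\pi_0$ already gives the sublinear growth of $\bar b_{\pi_0}$ and $\bar\sigma_{\pi_0}$ that you want for non-explosion.

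\textbf{A correction.} Coercivity of $V$ does not follow from (i). Take $d'=1$, $\beta(y)=-y^3$, $\alpha\equiv 1$ and $V(y)=2-(1+y^2)^{-1}$. Then $\mathcal{A}^1V(y)\to-2$ and (ii), (iv) hold, yet $V$ is bounded. In that case bounds on $\EE\int_0^T V^q(Y^{(n)}(s))\,ds$ give no tightness of $\Gamma^{(n)}$. So the following must be added as hypotheses:
\begin{itemize}
\item $V(y)\to\infty$;
\item your growth bound on $b,\sigma$ in $y$ by a power of $V$, locally uniformly in $x$;
\item continuity of $b,\sigma$ in $y$, needed to pass to weak limits against $\Gamma^{(n)}$ (this is not supplied by (iii)).
\end{itemize}
The paper's tightness and occupation-measure steps use these tacitly as well.
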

	The above result is also true when $Y^{(n)}$ is a jump-Markov process having generator $n\mathcal{A}^1$ with $\mathcal{A}^1$ as in \eqref{eq:gen-Y-jump}. This provides a mathematical justification for a reduced model in which the influence of the high-dimensional fast variables enters only through their equilibrium effect on the slow dynamics, leading to a significant reduction in complexity. In other words, the limiting  process $X_{\inv_0}$ is playing the role of $X_{\mrm{eff}}$ mentioned earlier with $b_{\mrm{eff}} \equiv \bar b_{\inv_0}$ and $\sigma_{\mrm{eff}} = \bar \sigma_{\inv_0}$, and the original inference problem is reformulated as one of  learning $\bar b_{\inv_0}$ and $\bar \sigma_{\inv_0}$ from the available $X$-observations.

	However the main difficulty now lies in the dependence of   $\bar b_{\inv_0}$ and $\bar \sigma_{\inv_0}$  on the stationary distribution $\inv_0$. This probability measure is unknown when the fast dynamics are not specified, but even when they are, $\inv_0$ cannot be computed efficiently as it is characterized implicitly through an intractable high-dimensional PDE as in \eqref{eq:inv-dist-pde}.
	
	Existing works on multiscale systems \cite{PavStu07, PaPoStu12, GaSp18, AbGa23}  considered settings where the reduced drift has a known parametric structure, for example,
	\(
	\bar\drft_{\inv_0}(x) = \vart_0 \tilde b(x),
	\)
	where \(\tilde b\) is known and \(\vart_0 \equiv \vart_{\inv_0}\) is a finite-dimensional parameter depending on $\inv_0$  (e.g. moments of $\inv_0$). In such cases, the inference task simply reduces to estimating the finite-dimensional parameter \(\vart_0\) by standard methods.
	
	In contrast, we consider a substantially more challenging setting in which the structural form of the averaged drift \(\bar\drft_{\inv_0}\) is completely unknown. Our aim is therefore to learn the effective dynamics in a fully nonparametric fashion, combining model reduction ideas from stochastic averaging with modern statistical and machine learning methodology.

	\textbf{A note about the learning problem: } Crucially, our inferential target is \(\bar\drft_{\inv_0}\), not \(\inv_0\). In general, the mapping \(\inv \in \mathcal{P}(\R^d) \mapsto \bar\drft_\inv(\cdot)\) is not injective. For example, if \(b(x,y)=b_0(x)y\), then \(\bar\drft_\inv(x)=b_0(x)\mfk{m}\) depends only on the mean \(\mfk{m}\) of \(\inv\), so infinitely many distinct invariant measures produce the same effective drift. For the class of SDEs considered here, it is \(\bar\drft_{\inv_0}\) that determines the reduced dynamics, governs the macroscopic behavior of the system, and is directly linked to the observable data through \(X\). It is therefore the primary statistical object of interest: unlike \(\inv_0\), it is typically identifiable at the level of the reduced model and practically estimable from $X$-data. Nevertheless, if \(b(\cdot,\cdot)\) is a characteristic kernel so that the map \(\pi \mapsto \bar\drft_\inv(\cdot)\) becomes injective, then our method also recovers true \(\inv_0\).


	\subsection{Inference Setup}

	For simplicity, we assume in this paper that the diffusion function $\dffun$ is known and does not depend on $Y$-component, i.e., $\sigma(x,y) = \sigma(x)$. Thus the inferential object is only  \(\bar\drft_{\pi_0}(\cdot) = \int b(\cdot,y)\,\inv_0(dy)\). We assume the kernel $b(\cdot,\cdot)$ is specified, so all uncertainty in the effective drift \(\bar\drft_{\pi_0}(x) = \int b(x,y)\,\pi_0(dy)\) arises from the latent distribution \(\pi_0\), rather than from the local interaction structure \(b(x,y)\) itself. This estimation task is particularly challenging because the unknown probability measure \(\pi_0\) is an infinite-dimensional 
	\(\mathcal{P}(\R^{d'})\)-valued parameter, which cannot be estimated by conventional techniques. 
	
	{\em Data and inference task: } Our data consists of a high-frequency observation vector  $\bx_{0:M_0} \dfeq (x_0,x_1, \hdots,x_{M_0})$ which is a realization of $(\bar X_{\inv_{0}})_{t_0 :t_{M_0}} \dfeq  (\bar X_{\inv_{0}}(t_0), \bar X_{\inv_{0}}(t_1), \hdots, \bar X_{\inv_{0}}(t_{M_0}))$ --- discrete observations from $\bar X_{\inv_0}$ at time points $\{t_m:m =01,2,\hdots, M_0\}$  satisfying $\Delta = t_m - t_{m-1} \ll 1$.	 Our objective is to compute an estimator $\hat{\bar b}_{\inv_0}$ based on the data $\bx_{0:M_0}$.

	We assume that  the unknown $\pi_0 \in \SC{P}^{(p)}_{\text{abs}}(\R^{d'})$ for some $p \geq 0$.  By Euler-Maruyama approximation, the (random) likelihood functional $L_T: C(\scr{X}, \R^d) \to \R$ for estimating the drift function $\bar\drft_{\pi_0}$ is given by
	\begin{align}
		\label{eq:abs-like}
		\begin{aligned}
			&L_T(\beta \mid   \bx_{0:M_0}) =\ \prod_{m=0}^{M_0-1} \f{1}{(2\Delta\pi)^{d/2} \mrm{det}^{1/2}(A(x_m))} e^{-R(\bx_{0:M_0)}) /2\Delta}\\
			\propto&\ \exp \lf\{ \sum_{m=0}^{M_0-1}  \beta^\top(x_m)A^{-1}(x_{m}) (x_{m+1}-x_{m-1})  - \f{\Delta}{2}\sum_{m=0}^{M_0-1}  \beta^\top(x_m)A^{-1}(x_{m})\beta(x_m) \Delta\ri\},
		\end{aligned}	
	\end{align}
	where $R(\bx_{0:M_0)}) \dfeq (x_{m+1}-x_m - \beta(x_m)\Delta)^\top A^{-1}(x_m)(x_{m+1}-x_m - \beta(x_m)\Delta)$ and $A(x)=\dffun\dffun^\top(x)$. An ideal estimator  of course belongs to the class of  maximum likelihood estimators (MLEs) defined as
	\begin{align}
		\label{eq:MLE-class-def}
		\mathcal{M}_{\mrm{mle}} =\lf\{\bar \drft_{\hat\rho_{*}}(\cdot)\equiv \int b(\cdot,y) \hat \rho_{*}(dy) : \hat \rho_{*} \in \mathcal{A} \ri\},
	\end{align}
	where 
	\begin{align}\label{eq:agmax-like}
		\hat \rho_{*} \in \mathcal{A}  \dfeq \argmin \lf\{-\ln L_T( \bar\drft_\rho \mid \bx_{0:M_0}): \rho \in \SC{P}^{(p)}_{\text{abs}}(\R^{d'})\ri\}.
	\end{align}
	Notice that $\mathcal{A}$ and $\mathcal{M}_{\mrm{mle}}$  might not be singleton, i.e., a MLE estimator of $\bar\drft_{\pi_0}$ is not unique.

	\subsection{Estimation procedure}
	Clearly, the optimization problem above is computationally infeasible, unless the state space of $Y$-component is simple (e.g., finite) or $b(\cdot,\cdot)$ has a simple structure (e.g., $\drft(x,y)$ is separable in $x$ and $y$ in the sense $\drft(x,y) = \drft_0(x)g_0(y)$). We approximate $\SC{P}^{(p)}_{\text{abs}}(\R^{d'})$ by a flexible parametric family $\SC{P}_n(\R^{d'})$ via  flow $f_\theta$: 
	\[
	\SC{P}_n(\R^{d'}) = \Big\{q^{(n)} \in \SC{P}(\R^{d'}): q^{(n)} \equiv q^{(n)}_{\theta_n}  = (f^{(n)}_{\theta_n})_{\#} \rfdist, \ \theta_n \in \Theta_n \subset \R^{d'}\Big\},
	\]
	where $\#$ denotes the push-forward, and $f^{(n)}_{\theta_n}$ is modeled as a neural network (NN) with increasing complexity (depth, width) captured by $n$. Clearly, $Z \sim \rfdist \implies f_\theta(Z) \sim q_\theta(\cdot)$.  Unlike parametric families such as Gaussian mixtures, {\em flows}
	represent complex probability laws via a transformation of a 
	simple reference distribution \cite{ReSh15, PaNaRe21}. 
	Mathematically, the justification of replacing $\SC{P}^{(p)}_{\text{abs}}(\R^{d'})$ comes from Lemma \ref{lem:wass-density-flows}, which shows any probability distribution in 	$\SC{P}^{(p)}_{\text{abs}}(\R^{d'})$ can be approximated by $\SC{P}_n(\R^{d'}) $ as the complexity of the underlying neural network increases.

\begin{lemma}[Wasserstein convergence of neural pushforwards]
		\label{lem:wass-density-flows}
		Assume that for some $p \ge 1$, 
		\(
		\nu_{\mathrm{ref}} \in \mathcal P^{(p)}_{\mathrm{abs}}(\R^{d'})
		\).
		Fix any 
		\(
		\pi_0 \in \mathcal P^{(p)}_{\mathrm{abs}}(\R^{d'})
		\). Then there exists a sequence of neural networks $\{f_{\theta_n}: \theta_n\}$ with non-polynomial activation function such that the family $\SC{P}_n(\R^{d'})$ is dense in $\SC{P}^{(p)}_{\mathrm{abs}}(\R^{d'})$ under the Wasserstein-$p$ metric, $\mathcal{W}_{p}$; specifically, for any $\inv \in \SC{P}^{(p)}_{\text{abs}}(\R^{d'})$
		\begin{align*}
			\lim_{n \rt \infty} \inf_{\theta_n \in \Theta_n}\mathcal{W}_p\lf(q^{(n)}_{\theta_n}, \inv\ri) =0.
		\end{align*}

		
	\end{lemma}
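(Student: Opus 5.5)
Given any $\inv \in \SC{P}^{(p)}_{\mathrm{abs}}(\R^{d'})$, the plan is to build an exact transport map from $\rfdist$ to $\inv$, truncate it to a compact set, approximate the truncated map uniformly by a neural network via the universal approximation theorem, and then pass the sup-norm error to a $\mathcal W_p$ error through the coupling $(f_{\theta_n}(Z), T(Z))$, $Z\sim\rfdist$. Since $\rfdist$ is absolutely continuous with finite $p$-th moment, Brenier's theorem (or, more elementarily, the Knothe--Rosenblatt triangular rearrangement) gives a Borel map $T:\R^{d'}\to\R^{d'}$ with $T_\#\rfdist=\inv$. Because $\inv$ has finite $p$-th moment, $\int \|T(z)\|^p\,\rfdist(dz) = \int\|y\|^p\,\inv(dy)<\infty$, so $T\in L^p(\rfdist;\R^{d'})$. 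For any measurable $g$ with $g\in L^p(\rfdist)$, the coupling $(g(Z),T(Z))$ yields
\begin{align*}
\mathcal W_p\bigl(g_\#\rfdist,\inv\bigr)\le \Bigl(\int \|g(z)-T(z)\|^p\,\rfdist(dz)\Bigr)^{1/p},
\end{align*}
so it suffices to approximate $T$ in $L^p(\rfdist)$ by neural networks.

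The approximation step proceeds in three reductions. First, truncate: set $T_R = T\,\indic_{\{\|T\|\le R\}}\indic_{\{\|z\|\le R\}}$; dominated convergence gives $\|T-T_R\|_{L^p(\rfdist)}\to 0$ as $R\to\infty$. Second, by Lusin's theorem applied to the finite measure $\rfdist$ restricted to the ball $B_R$, pick a continuous $\tilde T$ on $\R^{d'}$ with $\|\tilde T\|_\infty\le R$, agreeing with $T_R$ off a set of $\rfdist$-measure at most $\delta$. Then $\|T_R-\tilde T\|^p_{L^p}\le (2R)^p\delta$, and since $\tilde T$ is bounded it may additionally be taken to vanish outside a slightly larger ball $B_{R'}$ chosen so that $\rfdist(B_{R'}^c)$ is small. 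Third, by the universal approximation theorem (Leshno--Lin--Pinkus--Schocken) for non-polynomial activations, there is a network $f_{\theta_n}$ with $\sup_{\|z\|\le R'}\|f_{\theta_n}-\tilde T\|\le\varepsilon$.

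The main obstacle is controlling the network outside $B_{R'}$, where universal approximation gives no bound and $f_{\theta_n}$ may grow. I would handle this by composing with a final clipping layer: replace $f_{\theta_n}$ by $\mathrm{clip}_{R+1}\circ f_{\theta_n}$. Componentwise clipping is itself representable with ReLU units. For a general non-polynomial activation, I would instead note that approximating a bounded continuous target on a large ball costs only $\rfdist(B_{R'}^c)$ times a polynomial growth bound on $f_{\theta_n}$, and then choose $R'$ after fixing the architecture growth. With clipping, the tail contribution is at most $(2R+1)^p\rfdist(B_{R'}^c)$, which is small. Collecting the three error terms and choosing $R$, then $\delta$, then $R'$, then $\varepsilon$ gives $\inf_{\theta_n}\mathcal W_p(q^{(n)}_{\theta_n},\inv)\to 0$ as the network size $n\to\infty$. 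Here the networks are indexed by complexity, so that a network of size $n$ eventually contains the required architecture. If $\SC{P}_n$ is to consist of invertible flows, I would add a final perturbation $f+\eta\,\mathrm{id}$ and use monotone/triangular architectures; this changes $\mathcal W_p$ by at most $\eta(\int\|z\|^p\rfdist)^{1/p}$.
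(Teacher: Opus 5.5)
Your argument is correct, and its skeleton matches the paper's: a transport map $T$ with $T_\#\nu_{\mathrm{ref}}=\pi$ (Knothe--Rosenblatt or Brenier), then the coupling $(T,g)_\#\nu_{\mathrm{ref}}$ to get $\mathcal W_p(g_\#\nu_{\mathrm{ref}},\pi)\le\|g-T\|_{L^p(\nu_{\mathrm{ref}})}$, then approximation of $T$ in $L^p(\nu_{\mathrm{ref}})$ by networks.

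You differ only in how you get that last approximation. The paper cites an $L^p(\mu)$-density theorem for finite measures (Hornik) in one line. You instead derive it from compact-uniform universal approximation, via truncation, Lusin, and a clipping output layer.

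The paper's version is shorter. However, Hornik's $L^p(\mu)$ result is proved for bounded non-constant activations. For an unbounded non-polynomial activation such as ReLU, the network's behaviour on the tail of $\nu_{\mathrm{ref}}$ is exactly what must be controlled. Your clipping layer does this explicitly, at the cost of one extra hidden layer. You also make explicit two things the paper leaves implicit: that $T\in L^p(\nu_{\mathrm{ref}})$, which both arguments need, and that the architectures must be nested to pass from one $\varepsilon$-good network to $\lim_n\inf_{\theta_n}$.

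Two of your side remarks should be dropped or repaired. First, the fallback for a general non-polynomial activation is circular. The network given by universal approximation on $B_{R'}$ is chosen after $R'$, so its growth off $B_{R'}$ is not controlled in advance. Moreover, an activation like $e^t$ need not grow polynomially at all. Second, $f+\eta\,\mathrm{id}$ is not automatically invertible.

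Neither remark is needed. The lemma only asserts existence for some non-polynomial activation and does not require invertible maps, so your ReLU-with-clipping route suffices.
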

	\looseness=-1
The proof of the lemma is based on the Universal Approximation Theorem (UAT) and is provided in the appendix for completeness. Under certain extra assumptions, approximation rates of order $O(n^{-1/d})$ in $\mathcal{W}_p$-metric can be obtained, where $n$ denotes the number of network parameters \cite{YaLiWa22}.
	Thus flows form a highly expressive class of probability distributions suitable for \emph{likelihood-based} inference problem that can approximate  non-Gaussian, multimodal, and heavy-tailed distributions while still admitting 	efficient gradients and scalable optimization (for suitable architectures).
	
	{\em Flow based estimator:}  We now define  the flow based estimator of $\bar b_{\pi_0}$. The main idea is to restrict the admissible class of probability measures in \eqref{eq:agmax-like} to the family of pushforward measures $\mathcal{P}_{n}(\R^{d'}) $ but to ensure stability of the optimizer we look at a regularized (or penalized) optimization problem. For a probability measure $\rho \in \SC{P}^{(p)}_{\text{abs}}(\R^{d'})$, denote its $p$-th moment by $\mfk{m}_p(\rho) = \int |y|^p\,\rho(dy)$, and define the penalized loss function $\loss_\lambda: \SC{P}^{(p)}_{\text{abs}}(\R^{d'}) \rt \R$ as 
	\begin{align*}
		\loss_\lambda(\rho) = -\ln L_T( \bar\drft_\rho \mid \bx_{0:M_0})+\lambda \mfk{m}_p(\rho)
	\end{align*}
	Next let
	\begin{align}\label{eq:NN-para-min}
		\hat \theta_{n,\lambda} = \text{argmin}_{\theta_n \in \Theta_n} \loss_\lambda(q^{(n)}_{\theta_n}).
	\end{align}
	and define the flow-based estimator of $\bar b_{\pi_0}$ as
	\begin{align}
		\label{eq:theta-est}
		\bar b_{\hat \theta_{n,\lambda}}(\cdot)  \dfeq \bar b_{q^{(n)}_{\hat \theta_{n,\lambda}}}(\cdot) \equiv \int_{\R^{d'}}b(\cdot,y)q^{(n)}_{\hat \theta_{n,\lambda}}(dy)  \equiv \int_{\R^p}b(\cdot,f_{\hat \theta_{n,\lambda}}(z))\rfdist(dz) 
.	\end{align}

	%
	
	\begin{theorem}[Penalized parametric minimizers approach the global MLE set]		
		\label{thm:pen-mle-conv}
		Fix $p > 1$ and define the MLE-set $\mathcal{M}_{\mrm{mle}}$ and $\mathcal{A}$ by \eqref{eq:MLE-class-def} and \eqref{eq:agmax-like}.
		Suppose the drift-kernel $b(\cdot,\cdot)$ satisfies the following Lipschitz condition: there exists a locally bounded function  $C_b: \R^d \rt \R^d$ and exponent $0\leq q_0 <p-1$ such that
		\begin{align}
			\label{eq:lip-b-cond}
			\ \|b(x,y)-b(x,y')\| \le C_b(x)(1+\|y\|^{q_0}+\|y'\|^{q_0})\|y-y'\|.
		\end{align}
		Then for every $1\le p'<p$,
		\begin{align}\label{eq:conv-rho-mle}
			\lim_{\lambda\rt 0} \limsup_{n\to\infty} \mathcal{W}_{p'}\lf(q^{(n)}_{\hat \theta_{n,\lambda}}, \mathcal{A}\ri)=0.
		\end{align}
		For any compact set $K\subset \R^d$, 
		Then
		\begin{align}
			\label{eq:MLE-set-conv}
			\lim_{\lambda \rt 0} \limsup_{n\to\infty} \lf\|\bar b_{\hat \theta_{n,\lambda}} -\mathcal{M}_{\mrm{mle}}\ri\|_K=0.
		\end{align}
		%
		%
	\end{theorem}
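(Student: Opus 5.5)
Our plan is a $\Gamma$-convergence/compactness argument on $\SC{P}^{(p)}_{\mrm{abs}}(\R^{d'})$, carried out in two limits: first $n\to\infty$ with $\lambda$ fixed, then $\lambda\to 0$. Write $F(\rho) \dfeq -\ln L_T(\bar b_\rho\mid \bx_{0:M_0})$. Since the data $\bx_{0:M_0}$ are fixed and finite, $F$ depends on $\rho$ only through the finitely many vectors $\bar b_\rho(x_m)$, $m=0,\dots,M_0-1$. Moreover, $F$ is a quadratic function of these vectors with positive definite leading part $\f{\Delta^2}{2}A^{-1}(x_m)$. Consequently $F$ is bounded below, and it is coercive in the $\bar b_\rho(x_m)$.

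\textbf{Step 1 (continuity of $\rho\mapsto \bar b_\rho(x)$).} For $\rho,\rho'$ with finite $p$-th moments, take an optimal $\mathcal{W}_{p}$-coupling $\gamma$. By \eqref{eq:lip-b-cond} and Hölder's inequality with exponents $p/(p-1)$ and $p$,
\[
\|\bar b_\rho(x)-\bar b_{\rho'}(x)\| \le C_b(x)\Big(\int (1+\|y\|^{q_0}+\|y'\|^{q_0})^{p/(p-1)}d\gamma\Big)^{(p-1)/p}\mathcal{W}_p(\rho,\rho').
\]
The first factor is controlled by the $p$-th moments because $q_0p/(p-1)<p$, which holds since $q_0<p-1$. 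The same estimate with $p'$ in place of $p$ needs $q_0p'/(p'-1)\le p$, which can fail. So we will use the slack as follows: a family with uniformly bounded $\mfk{m}_p$ is uniformly integrable in $\|y\|^{r}$ for every $r<p$. Hence $\mathcal{W}_{p'}$-convergence together with bounded $\mfk{m}_p$ implies $\mathcal{W}_{r}$-convergence for any $r<p$. Choosing $r$ close to $p$ gives $\bar b_{\rho_k}\to\bar b_\rho$ uniformly on compacts $K$, since $C_b$ is locally bounded. In particular, $F$ and $\loss_\lambda$ are continuous along such sequences, and $\mfk{m}_p$ is lower semicontinuous under weak convergence.

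\textbf{Step 2 ($n\to\infty$, fixed $\lambda$).} Let $m_\lambda=\inf_{\SC{P}^{(p)}_{\mrm{abs}}}\loss_\lambda$. For any $\rho$, Lemma \ref{lem:wass-density-flows} yields $\theta_n$ with $\mathcal{W}_p(q^{(n)}_{\theta_n},\rho)\to0$, so $\mfk{m}_p(q^{(n)}_{\theta_n})\to\mfk{m}_p(\rho)$, and $F(q^{(n)}_{\theta_n})\to F(\rho)$ by Step 1. Therefore $\limsup_n \loss_\lambda(q^{(n)}_{\hat\theta_{n,\lambda}})\le m_\lambda$. Because $F$ is bounded below, $\lambda\mfk{m}_p(q^{(n)}_{\hat\theta_{n,\lambda}})$ is bounded, and Prokhorov's theorem gives tightness. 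Every subsequential limit $\rho_\lambda$ then satisfies $\mathcal{W}_{p'}$-convergence for $p'<p$ by uniform integrability, and Step 1 applies. By lower semicontinuity, $\loss_\lambda(\rho_\lambda)\le m_\lambda$, so $\rho_\lambda$ minimizes $\loss_\lambda$. There is one gap here: a limit point may fail to be absolutely continuous. We handle it by noting that $\loss_\lambda$ has the same infimum over all of $\SC{P}^{(p)}$, since absolutely continuous measures are $\mathcal{W}_p$-dense, and we read $\mathcal{A}$ in this closure sense if necessary.

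\textbf{Step 3 ($\lambda\to0$).} For $\rho_*\in\mathcal{A}$, minimality gives $F(\rho_\lambda)+\lambda\mfk{m}_p(\rho_\lambda)\le F(\rho_*)+\lambda\mfk{m}_p(\rho_*)$. It follows that $\mfk{m}_p(\rho_\lambda)\le \mfk{m}_p(\rho_*)$, and that $F(\rho_\lambda)\to \min F$. The bounded $p$-th moments give tightness and $\mathcal{W}_{p'}$-precompactness of $\{\rho_\lambda\}$, and continuity of $F$ shows every limit point lies in $\mathcal{A}$. Combining this with Step 2 by a diagonal/contradiction argument yields \eqref{eq:conv-rho-mle}. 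Finally, \eqref{eq:MLE-set-conv} follows from the uniform-on-$K$ estimate of Step 1, applied to the uniformly $p$-moment-bounded families. We expect the main obstacle to be this moment bookkeeping: the exponent $q_0<p-1$ must be matched against $\mathcal{W}_{p'}$ convergence, and uniform moment bounds must be secured along the $n$-sequence. The closedness issue for absolute continuity of limits is secondary.
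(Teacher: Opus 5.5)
Your proposal is correct and follows essentially the same route as the paper. The shared ingredients are:
\begin{itemize}
\item the H\"older bound exploiting $q_0<p-1$, upgraded via bounded $p$-th moments to $\mathcal{W}_{r}$-convergence for some $r\in[p/(p-q_0),p)$, which gives continuity of $\rho\mapsto\bar b_\rho$ uniformly on compacts;
\item penalty-induced moment bounds giving $\mathcal{W}_{p'}$-compactness of $\{q^{(n)}_{\hat\theta_{n,\lambda}}\}$, whose limits minimize $\loss_\lambda$;
\item the $\lambda\to0$ step combined with a subsequence/triangle-inequality argument, exactly as in the paper's Claim and its relative-compactness step.
\end{itemize}
Two small remarks. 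Along merely $\mathcal{W}_{p'}$-convergent sequences, $\loss_\lambda$ is only lower semicontinuous, not continuous as you state in Step 1, though lower semicontinuity is all you actually use. Your explicit handling of possible loss of absolute continuity in the limit, via $\mathcal{W}_p$-density and reading $\mathcal{A}$ over $\mathcal{P}^{(p)}$, is more careful than the paper, which silently treats limit points as elements of $\mathcal{A}$; both you and the paper tacitly assume $\mathcal{A}\neq\emptyset$.
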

	
	Thus the  estimators $\bar b_{\hat \theta_{n,\lambda}}$ converges to the the MLE set $\mathcal{M}_{\mrm{mle}}$ as the complexity of the parameteric class (e.g. complexity of the underlying NN) increases and the penalty parameter $\lambda$ decreases.

\looseness=-1	
\textbf{Monte-Carlo Approximation:} Now fix a NN architecture $\{f_\theta: \theta \in \Theta\}$. In practice, the loss function $\loss_{\lambda}(q_\theta)$ (where $q_\theta = (f_\theta)_{\#}\rfdist$) requires the evaluation of $\bar b_{\theta}(\cdot)  \equiv \bar\drft_{q_\theta}(\cdot) = \int_{\R^{d'}}b(\cdot, y)q_\theta(dy)$, and the integral is typically unavailable in closed form. We therefore use a Monte Carlo (MC) approximation at the observed states, $\bar\drft^{\mathrm{MC}}_\theta\bigl(x_m\bigr)
		=
		\frac{1}{L}\sum_{l=1}^L \drft \bigl(x_m, f_\theta(Z^{(l)})\bigr)$   where
	$
	Z^{(l)} \stackrel{i.i.d.}\sim \rfdist, \ l = 1,2,\ldots,L,
	$
Notice that by the law of large numbers,
$\bar\drft^{\mathrm{MC}}_\theta(x_m) \to \bar b_{\theta}(x_m)$ as $L \to \infty$. In fact, under mild regularity conditions, this convergence is uniform in $\theta$ (assuming $\Theta$ is compact). It can then be shown that the estimator $\hat\theta^{\mathrm{MC}}$, computed as 
{\setlength{\abovedisplayskip}{-0.2pt}
 \setlength{\belowdisplayskip}{-0.2pt}
$$\hat\theta^{\mathrm{MC}}
\dfeq
\argmin_{\theta \in \Theta}
\loss_\lambda\!\left(\bar\drft^{\mathrm{MC}}_\theta\right),$$
}
%
converges to $\hat \theta$ as $L$ increases.
Note that, in order to evaluate the log-likelihood \(\ln L_T(\bar\drft^{\mathrm{MC}}_\theta \mid \bx_{0:M_0})\), the MC approximation
	\(\bar\drft^{\mathrm{MC}}_\theta\) is required to be evaluated only at the finitely many observed states \(\bx_{0:M_0-1}\).  	An important advantage of the flow-based parametrization of \(q_\theta\) is that it enables the use of the {\em reparameterization trick} for gradient-based optimization. Since the samples \(Z^{(l)}\) are drawn from a fixed reference distribution \(\rfdist\) that does not depend on the NN parameter \(\theta\), gradients of
	\(\ln L_T(\bar\drft^{\mathrm{MC}}_\theta \mid \bx_{0:M_0})\)
	can be computed by differentiating through the Monte Carlo approximation of the drift.
	
	In particular, when computing $\nabla_\theta \ln L_{T}$, differentiation with respect to $\theta$ can be pushed inside the Monte Carlo sums, and the resulting derivatives act only through the map $f_\theta$. At the integrand level, this involves the chain rule
$
	\partial_\theta \drft\bigl(x, f_\theta(x,Z)\bigr)
	=
	\partial_y \drft(x,y)\big|_{y=f_\theta(x,Z)} \,
	\partial_\theta f_\theta(x,Z),
$
	This approach bypasses the need for the score-function estimator, which would require gradients of the density $\rho_\theta$ and typically suffers from high variance. By contrast, our formulation allows for high-precision, low-variance gradient estimates via standard backpropagation through the flow map $f_\theta$.

	\subsection{Variational Inference} 
 \looseness=-1
	While the  flow-based optimization described above performs well in many settings, principled uncertainty quantification requires a Bayesian formulation. To this end, we place a prior distribution $p_{\mathrm{prior}}(\theta)$ over the neural network parameters. Applying Bayes' rule yields the posterior distribution
	
	\begin{align}
		\label{eq:post}
		\post(\theta \mid \bx_{0:M_0})
		&= L_T(\bar b_\theta^{\mrm{MC}}\mid   \bx_{0:M_0}) 
			\,p_{\mathrm{prior}}(\theta) \Big/		
			p_{\mathrm{marginal}}(\bx_{0:M_0}), 
	\end{align}
	where the normalization constant $p_{\mathrm{marginal}}(\bx_{0:M_0})= \int L_T(\bar b_\theta^{\mrm{MC}}\mid   \bx_{0:M_0}) 
	\,p_{\mathrm{prior}}(\theta)
	\, d\theta$ does not depend on $\theta$.  	However, since the parameter $\theta$ typically lies in a high-dimensional space, standard MCMC methods become computationally expensive and often impractical for posterior approximation.
	We adopt a variational inference framework to obtain a scalable approximation of the posterior \(\post\) using a parametric family \(\SC{P}_1(\Theta)\) induced by a flow \(g_\vart\):
	\begin{align}
		\label{norm_flow}
		\SC{P}_1(\Theta)
		=
		\Big\{
		\rho \in \SC{P}(\Theta):
		\rho \equiv \rho_\vart = (g_\vart)_{\#}\rho_{\mathrm{ref}},
		\ \vart \in \SC{V}
		\Big\},
	\end{align}
	where \(g_\vart\) is modeled using a neural network. The objective is to determine the optimal parameter by minimizing the reverse KL-divergence between the true posterior and its variational approximation:
	\begin{align*}
		\vart^*
		=
		\argmin_{\vart\in\SC{V}}
		\mathrm{KL}\big(
		\rho_\vart(\cdot)
		\,\|\, 
		\post(\cdot\mid \bm{x}_{0:M_0})
		\big).
	\end{align*}
	Since the marginal likelihood does not depend on variational parameter \(\vart\), easy algebra shows that the above minimization problem is equivalent to maximizing the Evidence Lower Bound (ELBO):
	\begin{align}
		\mathrm{ELBO}(\vart)
		&=
		\int_{\Theta}
		\ln L_T(\bar b_\theta^{\mrm{MC}}\mid   \bx_{0:M_0}) 	\rho_{\vart}(\theta) d\theta	-
		\mathrm{KL}\big(
		\rho_\vart \,\|\, p_{\mathrm{prior}}
		\big).
	\end{align}
	Thus, the optimal parameter is obtained by
	\begin{align*}
		\vart^*
		=
		\argmax_{\vart\in\SC{V}}
		\mathrm{ELBO}(\vart).
	\end{align*}
	To optimize the ELBO, we employ the reparameterization trick. Since
	\(
	\rho_\vart = (g_\vart)_\# \rho_{\mathrm{ref}},
	\)
	sampling from \(\rho_\vart\) can be achieved by first drawing
	\(
	\xi \sim \rho_{\mathrm{ref}}
	\)
	and setting
	\(
	\theta = g_\vart(\xi).
	\)
	This allows us to approximate the ELBO using Monte-Carlo samples:
	\begin{align*}
		\mathrm{ELBO}(\vart)
		\approx
		\frac{1}{K}
		\sum_{k=1}^K
		\ln L_T\left(
		\bar\drft^{\mathrm{MC}}_{\theta^{(k)}} \mid   \bx_{0:M_0}\right)
		-
		\mathrm{KL}\big(
		\rho_\vart \,\|\, p_{\mathrm{prior}}
		\big),
		\quad \theta^{(k)} = g_\vart(\xi^{(k)}), \
		\xi^{(k)} \stackrel{\mrm{iid}}\sim \rho_{\mathrm{ref}} .
	\end{align*}
	This formulation enables efficient gradient-based optimization of the ELBO
	with respect to \(\vart\).

	\begin{algorithm}[H]
		\caption{Flow-based Variational Inference for Integral-Drift SDEs}
		\label{alg:flow-vi}
		\begin{algorithmic}[1]
			\Require Data $\bx_{0:M_0}$, kernel $b(x,y)$, flows $f_\theta$, $g_\vart$, prior $p_{\mathrm{prior}}$, sample sizes $L,K$
			\Ensure Variational parameter $\hat\vart$
			
			\State Initialize $\vart$
			
			\Repeat
			
			\State Sample $\xi^{(1)},\dots,\xi^{(K)} \sim \rho_{\mathrm{ref}}$  \hfill (Reparameterization)
			
			\State Compute
			$
			\theta^{(k)} = g_\vart(\xi^{(k)}), \quad k=1,\dots,K
			$
			
			\For{$k=1,\dots,K$}
			
			\State Sample $Z^{(1)},\dots,Z^{(L)} \sim \rfdist$
			
			\For{$m=0,\dots,M_0-1$}
			
			\State Compute Monte-Carlo drift: 
			$
			\bar b^{(\mathrm{MC},L)}_{\theta^{(k)}}(x_m)
			=
			\frac1L\sum_{l=1}^L
			b(x_m,f_{\theta^{(k)}}(Z^{(l)}))
			$
			
			\EndFor
			
			\EndFor
			
			\State Estimate Monte-Carlo ELBO
			
			\State Update $\vart$ using gradient ascent:
			$
			\vart \leftarrow
			\vart
			+
			\eta \nabla_\vart
			{\mathrm{ELBO}}(\vart)
			$
			
			\Until{convergence}
			
			\State \Return $\hat\vart$
			
		\end{algorithmic}
	\end{algorithm}

\section{Experiments} 

\textbf{Model:}	We consider a tagged particle interacting with a collection of solvent particles. The solvent configuration consists of $N$ particles with positions $y_1,\dots,y_N \in \mathbb{R}^d$, whose interactions are governed by the quadratic potential
{\setlength{\abovedisplayskip}{-2pt}
		\setlength{\belowdisplayskip}{-2pt}
\begin{align}
\label{eq:potential}
U_N(y_1,\dots,y_N)
=
\frac{a}{2}\sum_{i<j}\|y_i-y_j\|^2
+
\frac{\kappa}{2}\sum_{i=1}^N \|y_i\|^2,
\qquad a,\kappa>0.
\end{align}
}
\enlargethispage{2\baselineskip}	
This induces the following fast gradient type SDE \eqref{eq:fast_sde} for  $Y^{(n)}$ with drift 
$
\beta(y) = -\gamma \nabla_y U_N(y), \ \gamma>0,
$
and $\alpha(y) = \sqrt 2.$
The corresponding invariant distribution is the Gibbs measure
{\setlength{\abovedisplayskip}{3pt}
		\setlength{\belowdisplayskip}{-2pt}
\begin{align}
\label{eq:gibbs}
\inv_0(dy_1,\dots,dy_N)
\propto
\exp\bigl(-\gamma U_N(y_1,\dots,y_N)\bigr)\,dy_1\cdots dy_N.
\end{align}
}

The tagged particle $X^{(n)}$ interacts with the solvent through a pairwise potential $V$, yielding the drift
{\setlength{\abovedisplayskip}{-1pt}
 \setlength{\belowdisplayskip}{-1pt}
\begin{align*}
b(x,y_1,\dots,y_N)
=
-\sum_{i=1}^N \nabla_x V(\|x-y_i\|).
\end{align*}
Under scale separation, the solvent rapidly equilibrates, and $X_{\inv_0}$ satisfies SDE \eqref{eq:inv-dist-pde} with $\bar b_{\inv_0}$.
}

{\bf Experimental Design:} 	Synthetic data are generated by simulating the  multiscale SDE system given by \eqref{eq:slow_sde} and \eqref{eq:fast_sde} with the above drift and diffusion functions
 This ground truth $\inv_0$is only used for evaluating estimation accuracy. 	
	The SDE system is simulated over a fixed time horizon $[0,5]$ using a fine
	discretization to obtain high-frequency observations. Importantly, evaluation of true averaged SDE with the estimated one is done beyond the observed time-interval
	
\textbf{Computational Resources:}
Experiments were performed on an HPC node with dual Intel Xeon Gold 6342 CPUs and an NVIDIA A2 16\,GB GPU using PyTorch.

\textbf{Comparison:}	We compare the proposed structured estimator with a direct (unstructured) NN model for $\bar b_{\inv_0}$ that ignores the integral representation 
$\bar b_{\inv_0}(x) = \int b(x, y)\,\inv_0(dy)$. 
This NN is trained by maximizing the likelihood induced by the Euler--Maruyama discretization of the corresponding SDE.

	\begin{table}[H]
		\label{tab:MSE}
		\centering
		\setlength{\tabcolsep}{10pt}
		\renewcommand{\arraystretch}{1.3}
		
		\begin{tabular}{|c||c|c|c|c|c|}
			\hline
			$d$ & 1 & 1 & 1 & 2 & 2 \\
			\hline
			$N$ & 10 & 15 & 20 & 10 & 15 \\
			\hline
			$n=100$  & 0.02 & 0.0008 & 0.0053 & 0.0058 & 0.025 \\
			$n=1000$ & 0.0029 & 0.0007 & 0.0004 & 0.0027 & 0.0035 \\
			$n=15000$ & 0.0002 & 0.0002 & 0.0001 & 0.004 & 0.0055 \\
			\hline
		\end{tabular}
		\vs{.2cm}
		\caption{Mean squared errors for different datasets}
	\end{table}
	
\vs{-0.7cm}
\textbf{Results:} Table \ref{tab:MSE} reports the mean squared error (MSE) between the estimated and true averaged drift functions, evaluated on a grid of state values, and demonstrates the accuracy of the proposed method. In addition, we provide visual comparisons of the estimated and true drift functions, and plot trajectories of the true slow process alongside those generated by the SDE driven by the learned averaged drift (with a fixed random seed), both within and beyond the observation time window. 
The close agreement between these trajectories, even outside the observation window, is particularly noteworthy, as it demonstrates the predictive capability of the learned model beyond the range of observed data. This further indicates that the accuracy of the estimated averaged drift $\hat{\bar{b}}_{\inv_0}$ arises from genuine learning by the proposed method rather than overfitting.  In contrast, the unstructured neural network model performs noticeably worse; for the first setting ($d=1$, $N=10$), it achieves MSE values of $0.014$ and $0.008$ for $n=1000$ and $n=15000$, respectively, which are approximately 10 times  larger than those of the structured flow-based estimator.


	\begin{figure}[!htb]
		\centering
		\setlength{\abovecaptionskip}{2pt}
		\setlength{\belowcaptionskip}{0pt}
		
		\begin{minipage}{0.3\textwidth}
			\centering
			\includegraphics[width=0.97\linewidth]{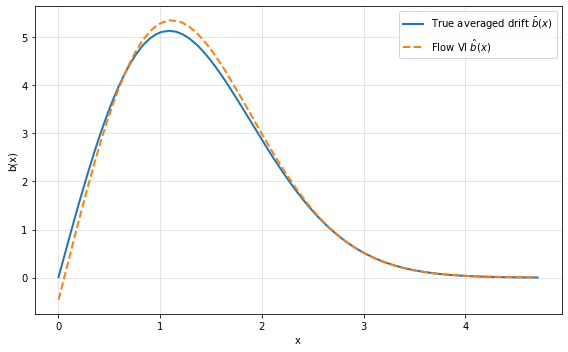}
			
			{\scriptsize $n=100$}
		\end{minipage}
		\hfill
		\begin{minipage}{0.3\textwidth}
			\centering
			\includegraphics[width=0.97\linewidth]{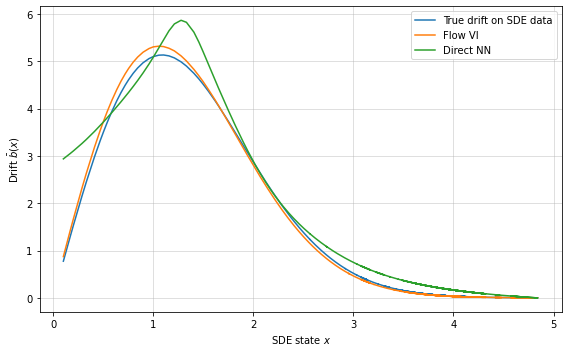}
			
			{\scriptsize Benchmark with a neural network}
		\end{minipage}
		\hfill
		\begin{minipage}{0.3\textwidth}
			\centering
			\includegraphics[width=0.97\linewidth]{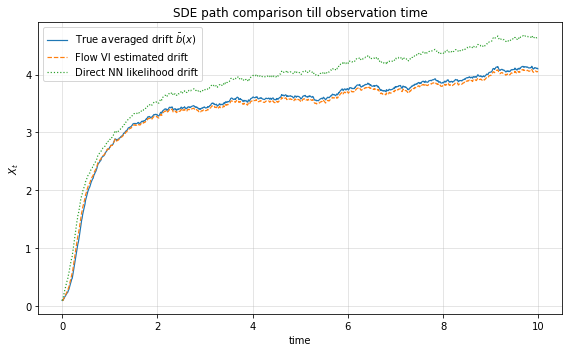}
			
			{\scriptsize SDE path comparison till observation time}
		\end{minipage}
		
		\vspace{0.05cm}
		
		\begin{minipage}{0.3\textwidth}
			\centering
			\includegraphics[width=0.97\linewidth]{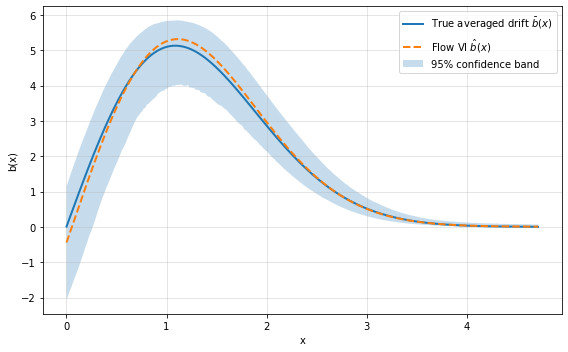}
			
			{\scriptsize CB, $n=100$}
		\end{minipage}
		\hfill
		\begin{minipage}{0.3\textwidth}
			\centering
			\includegraphics[width=0.97\linewidth]{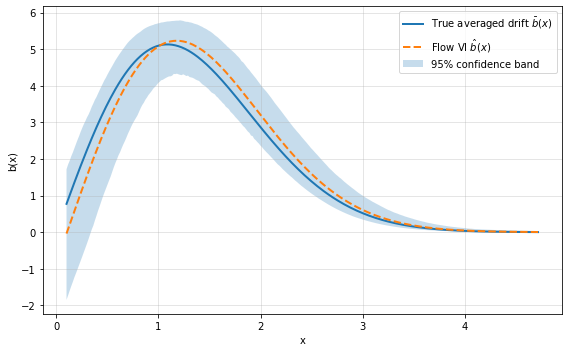}
			
			{\scriptsize CB, $n=1000$}
		\end{minipage}
		\hfill
		\begin{minipage}{0.3\textwidth}
			\centering
			\includegraphics[width=0.97\linewidth]{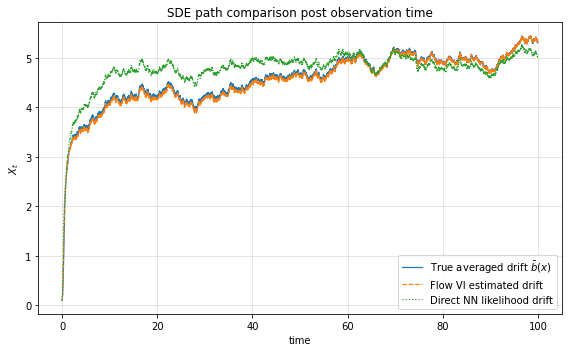}
			
			{\scriptsize SDE path comparison post observation time}
		\end{minipage}
		
		
		\caption{One-dimensional drift recovery with confidence bands.}
		\label{fig:onedim-drift-recovery}
	\end{figure}
	
	\begin{figure}[H]
		\centering
		\setlength{\abovecaptionskip}{2pt}
		\setlength{\belowcaptionskip}{0pt}
		
		\begin{minipage}{0.30\textwidth}
			\centering
			\includegraphics[width=0.9\linewidth]{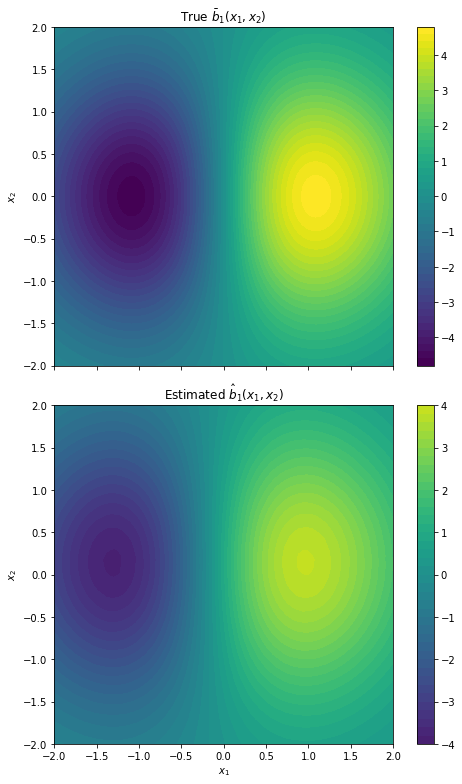}
			
			{\scriptsize $b_1$, $n=100$}
		\end{minipage}
		\hfill
		\begin{minipage}{0.30\textwidth}
			\centering
			\includegraphics[width=0.9\linewidth]{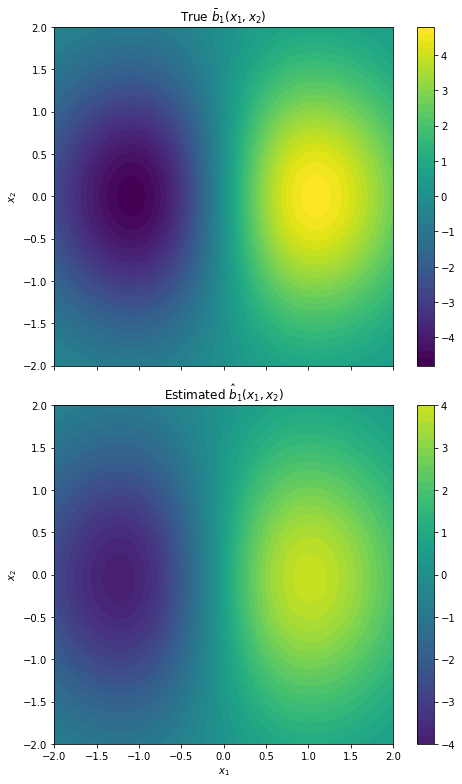}
			
			{\scriptsize $b_1$, $n=1000$}
		\end{minipage}
		\hfill
		\begin{minipage}{0.30\textwidth}
			\centering
			\includegraphics[width=0.9\linewidth]{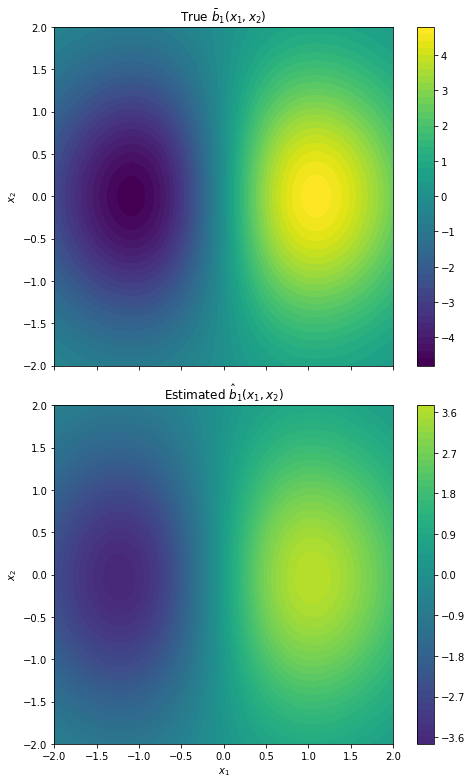}
			
			{\scriptsize $b_1$, $n=15000$}
		\end{minipage}
		
		\vspace{0.15cm}

		\begin{minipage}{0.30\textwidth}
			\centering
			\includegraphics[width=0.9\linewidth]{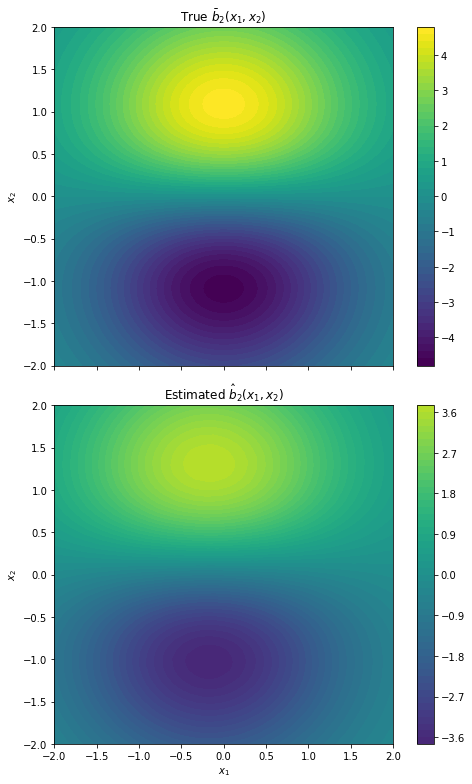}
			
			{\scriptsize $b_2$, $n=100$}
		\end{minipage}
		\hfill
		\begin{minipage}{0.30\textwidth}
			\centering
			\includegraphics[width=0.9\linewidth]{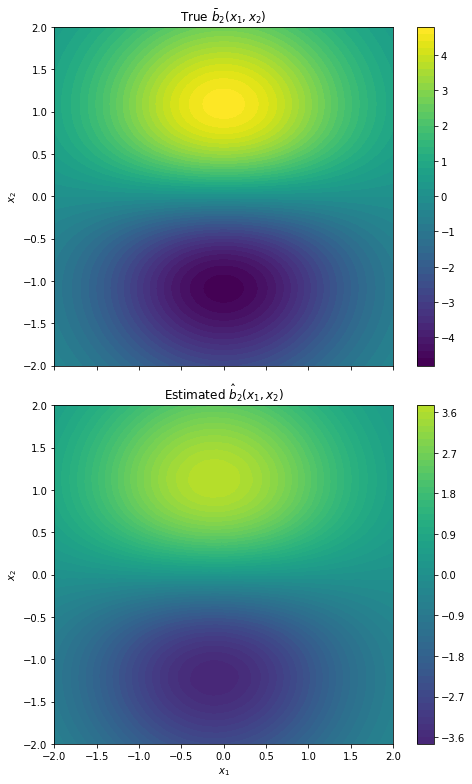}
			
			{\scriptsize $b_2$, $n=1000$}
		\end{minipage}
		\hfill
		\begin{minipage}{0.30\textwidth}
			\centering
			\includegraphics[width=0.9\linewidth]{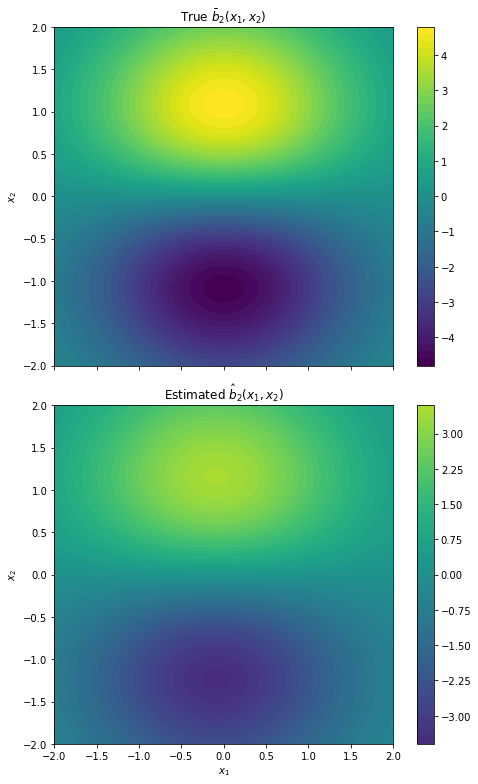}
			
			{\scriptsize $b_2$, $n=15000$}
		\end{minipage}
		
		\vspace{0.1cm}
		
		\caption{Two-dimensional drift recovery for the two drift components. The first row shows recovery of $b_1$, and the second row shows recovery of $b_2$, for increasing values of the scale parameter $n$.}
		\label{fig:twodim-drift-recovery-b1-b2}
	\end{figure}
	\section{Conclusion}
We proposed a data-driven framework for learning effective dynamics in partially observed multiscale stochastic systems. A principled stochastic averaging reduction yields a lower-dimensional averaged SDE for the slow variables, where the drift depends on the invariant distribution of the latent fast process. We formulate inference as a latent-measure learning problem, parameterize this invariant measure using a normalizing flow, and train the resulting model via a likelihood objective based on reduced model.
We show that penalized minimizers over increasingly expressive neural pushforward classes approach the global MLE under suitable assumptions. We further introduce a flow-based variational posterior to quantify uncertainty in the learned effective dynamics.

\looseness=-1
The performance of the method critically depends on the expressivity of the normalizing flow. The proposed approach, as described, does not directly apply to strongly coupled multiscale systems where the fast dynamics depend on the slow variables, nor to settings with sparse and noisy observations of the slow component. Extending the method to these regimes, together with the development of more expressive architectures based on continuous-time flows such as neural ODEs, constitutes an important direction for future work.

	\bibliography{Ref-ML-new}
	\bibliographystyle{plain}

	
	\appendix
	
	\section{Theoretical details and results}
	\subsection{Stochastic averaging of fast-slow sde system}
	
	\begin{proof}[\bf Proof of Theorem \ref{th:sto-avg}]
		Notice that $Y^{(n)}(\cdot) \eqd Y(n\ \cdot)$ where $Y$ satisfies the SDE:
		\begin{align}
			dY(t) =  \beta\bigl(Y (t)\bigr)\,dt	+ \alpha\bigl(Y(t)\bigr)\,d\tilde W(t).
		\end{align}
		Fix $p>1$. By It\^o's lemma 
		\begin{align*}
			V^p(Y(t)) =&\ V^p(y_0) + p \int_0^t V^{p-1}(Y(s)) \mathcal{A}^1(Y(s)) ds\\
			& \hs{.1cm} + p(p-1)\int_0^t V^{p-2}(Y(s)) \nabla^\top V(Y(s))(\alpha\alpha^\top(Y(s)))\nabla V(Y(s))\ ds+ M_p(t),
		\end{align*}
		where $M_p(t) \dfeq p \int_0^t V^{p}(Y(s))\nabla^\top V(Y(s))\alpha(Y(s))d\tilde W(s)$ is a martingale.	Now by Assumption \ref{assum:sto-avg}, for any $K>0$, there exists an $R_K>0$ such that for $\|y\|>R_0$, 
		$$\mathcal{A}^1 V(y) \leq - c_*/2, \quad \nabla^\top V(y)(\alpha\alpha^\top(y))\nabla V(y) \leq  \f{1}{K} V(y) |\mathcal{A}^1 V(y)| \equiv   -\f{1}{K} V(y) \mathcal{A}^1 V(y).$$
		By splitting the second integral according as $\|Y(s)\| > R_K$ or not we get for some constant $C_p$
		\begin{align*}
			V^p(Y(t)) \leq&\ V^p(y_0) + C_p t  + \lf(p - p(p-1)/K\ri) \int_0^t V^{p-1}(Y(s))\mathcal{A}^1(Y(s)) \indic_{\{\|Y(s)\| >R_K\}} ds\\
			& \ + M_p(t).
		\end{align*}
		Choosing $K=2p$, and using Assumption \ref{assum:sto-avg} it follows that
		\begin{align*}
			\sup_t \f{1}{t} \int_0^t \EE V^p(Y(s)) \indic_{\{\|Y(s)\| >R_K\}} ds < \infty. 
		\end{align*}	
		It follows that for any $t>0$ 
		\begin{align}\label{eq:lyapu-bd}
			\sup_t \f{1}{t} \int_0^{t} \EE V^p(Y(s)) ds \equiv  \sup_t  \EE \int_{\R^d} V^p(y) \Gamma_t(dy) < \infty,
		\end{align}
		where the occupation measure $\{\Gamma_t\}$ of $Y$ is  defined by $\Gamma_t(A) = \f{1}{t}\int _0^{t} \indic_{\{Y(s) \in A\}}\ ds$. 
		It  follows that  $\{\Gamma_t\}$ is tight as $\mathcal{P}(\R^{d'})$-valued random variables. If $\eta$ is a limit point of $\{\Gamma_t\}$, then by the Krylov-Bogoliubov theorem $\eta$ is a stationary distribution of $Y$. Assumption \ref{assum:sto-avg}:(iv) implies that  $Y$ is irreducible and hence admits a unique stationary distribution $\pi_0$. Therefore, we have $\eta = \pi_0$, and thus $\Gamma_t \stackrel{t \rt \infty} \Rt \inv_0$ in probability in the space $\mathcal{P}(\R^{d'})$. Furthermore, since \eqref{eq:lyapu-bd} holds, for any function $f$ satisfying $f=O(V^p)$ for some $p>0$, $\f{1}{t} \int_0^t f(Y(s)) ds =  \int f(y) \Gamma_t(dy) \stackrel{t \rt \infty} \Rt \int f(y) \inv_0(dy).$
		
		Now by a simple change of variable formula we have for any (measurable) function $f$, $\int_0^t f(Y^{(n)}(s))ds \eqd \f{1}{n}\int_0^{nt} f(Y(s))\ ds$. 
		and thus for any fixed $T>0$
		Define the random measure $\Gamma^{(n)}$ on $\R^d \times [0,t]$ as $\Gamma^{(n)} (A \times [0,t]) = \int_{\R^d \times [0,t]} \indic_{\{Y^{(n)}(s) \in A\}} ds$. It follows that $\Gamma^{(n)} \stackrel{n \rt \infty} \Rt\inv_0 \ot \mrm{Leb}$  and that for any continuous $g: \R^{d'} \times [0,T] \rt \R^{d"}$ satisfying $\sup_{s\leq T}\|g(y,s)\| \leq C_g V^p(s)$ for some $p>0$,  we have
		\begin{align}\label{eq:conv-occ}
			\int_{\R^d \times [0,t]}  g(y,s) \Gamma^{(n)}(dy\times ds) \equiv \int_{\R^d \times [0,t]} g(Y^{(n)}(s),s)ds \stackrel{n \rt \infty} \Rt \int_{\R^d \times [0,t]} f(y,s) \inv_0(dy)ds.
		\end{align}
		
		Next, simple estimates show that $\sup_{0 \leq t, t+h \leq T}\EE(\|X^{(n)}(t+h) - X^{(n)}(t)\|^2 = O(h^\alpha)$ from which it follows that the sequence $\{X_n\}$ is tight in $C([0,T], \R^d)$. Let $ X$ is a limit point of $\{X_n\}$. By Skorohord representation theorem we can assume without loss of generality that $X_{n}  \Rt X$ a.s. in $C([0,T], \R^d)$  along a subsequence, which for notational convenience we continue to index by $n$. We next show that 
		\begin{align}\label{eq:conv-drft}
			\int_{0}^t (b(X_n(s), Y_n(s)) ds \stackrel{n \rt \infty} \Rt \int_{\R^{d'}\times [0,t]} (b(X(s), y) \inv_0(dy) ds = \bar b_{\inv_0}(X(s)).
		\end{align}
		To this end, write
		\begin{align*}
			\int_{0}^t (b(X_n(s), Y_n(s)) ds =&  \int_{\R^{d'}\times[0,t]} (b(X_n(s), y) \Gamma^{n}(dy \times ds) 
			= I_1(t)+I_2(t),   
		\end{align*}
		where by Assumption \ref{assum:sto-avg}, \eqref{eq:conv-occ} as $n \rt \infty$
		\begin{align*}
			\sup_{t\leq T}|I_1(t)| = &\ \sup_{t\leq T}\lf|\int_0^t  ((b(X_n(s), Y_n(s)) - b(X(s),Y_n(s))) ds\ri|\\
			\leq&\  C_0 \sup_{t\leq T} |X_n(t) - X(t)|^{\alpha_0} \int_0^T V^{p_0}(Y^{(n)}(s))ds \stackrel{\PP} \Rt 0. 
		\end{align*}
		Next by \eqref{eq:conv-occ}
		\begin{align*}
			I_2(t) = \int_0^t  b(X_n(s), Y_n(s)) ds &\ \equiv  \int_{\R^{d'}\times [0,t]} (b(X(s), y) \Gamma^{n}(dy \times ds)\\
			& \  \stackrel{n \rt \infty} \Rt \int_{\R^{d'}\times [0,t]} (b(X(s), y) \inv_0(dy) ds = \bar b_{\inv_0}(X(s)),
		\end{align*}
		which proves \eqref{eq:conv-drft}. Similarly, by Burkholder-Davis-Gundy inequality it follows that 
		\begin{align*}
			E\lf[\sup_{t\leq T}\lf|\int_0^t  ((\sigma(X_n(s), Y_n(s)) - \sigma(X(s),Y_n(s))) dW(s)\ri|^2\ri] \stackrel{n \rt \infty} \Rt 0.
		\end{align*}
		Notice that for the martingale term $M^{(n)}(\cdot) = \int_0^\cdot \sigma(X(s), Y_n(s)) dW(s)$ has the quadratic variation given by
		$$[M^{(n)}]_t = \int_0^t \sigma\sigma^\top (X(s), Y_n(s)) ds  \stackrel{n \rt \infty} \Rt \int_{\R^{d'}\times [0,t]} \sigma\sigma^\top (X(s), y) \inv_0(dy) ds \equiv  \bar{\sigma}^2_{\inv_0}(X(s)).$$
		It follows by the Martingale CLT that  as $n\rt \infty$
		\begin{align}\label{eq:conv-drft}
			M^{(n)}(t) \stackrel{\mrm{Law}} \Rt \bar{\sigma}_{\inv_0}(X(s)).
		\end{align}
		Therefore the limit point $X$ satisfies \eqref{eq:int-drift-sde}, and since this SDE admits a unique solution the result follows.
		%
	\end{proof}		
	
	\subsection{Wasserstein convergence of neural pushforwards}
	\begin{proof}[Proof of Lemma \ref{lem:wass-density-flows}]
		Since 
		\(
		\nu_{\mathrm{ref}}, \pi_0 \in \mathcal P^{(p)}_{\mathrm{abs}}(\R^{d'})
		\),
		there exists a measurable transport map 
		\(
		T:\R^{d'} \to \R^{d'}
		\)
		such that
		\[
		T_{\#}\nu_{\mathrm{ref}} = \pi_0 .
		\]
		This follows, for instance, from the Knothe--Rosenblatt rearrangement or Brenier's theorem under absolute continuity.
		
		Next, we use the universal approximation theorem in $L^p(\mu)$ for general measures.  
		By \cite{hornik1991approximation}, finite neural networks with non-polynomial activation functions are dense in $L^p(\mu)$ for any finite measure $\mu$ (also see \cite{KiLy20}).  
		Taking $\mu = \nu_{\mathrm{ref}}$, there exists a finite neural network with non-polynomial activation such that,
		\(
		f:\R^{d'}\to\R^{d'}
		\)
		such that
		\[
		\|f - T\|_{L^p(\nu_{\mathrm{ref}})} 
		=
		\left(
		\int_{\R^{d'}} 
		\|f(x)-T(x)\|^p
		\,\nu_{\mathrm{ref}}(dx)
		\right)^{1/p}
		< \varepsilon .
		\]
		Define the coupling
		\[
		\gamma := (T,f)_{\#}\nu_{\mathrm{ref}} .
		\]
		Then $\gamma$ is a valid coupling between 
		\(
		T_{\#}\nu_{\mathrm{ref}} = \pi_0
		\)
		and 
		\(
		f_{\#}\nu_{\mathrm{ref}}
		\).
		Therefore, by definition of Wasserstein distance,
		\[
		W_p^p\bigl(\pi_0, f_{\#}\nu_{\mathrm{ref}}\bigr)
		\le
		\int_{\R^{d'}}
		\|T(x)-f(x)\|^p
		\,\nu_{\mathrm{ref}}(dx).
		\]
		Taking the $p$th root yields
		\[
		W_p\bigl(\pi_0, f_{\#}\nu_{\mathrm{ref}}\bigr)
		\le
		\|f-T\|_{L^p(\nu_{\mathrm{ref}})}
		< \varepsilon .
		\]
		
		This completes the proof.
		
	\end{proof}
	\subsection{Penalized Likelihood Convergence}
	\begin{lemma}
		\label{lem:cmpt-wass}
		Let $p>1$ and  $C>0$. The set $ \scr{P}^{(p)}_C\dfeq \lf\{\eta \in \mathcal{P}(\R^{d'}): \mfk{m}_p(\eta) \equiv  \int \|y\|^p \eta(dy) \leq C\ri\}$ is compact in weak topology of $\mathcal{P}(\R^{d'})$. If $\tilde \eta$ is a limit point of $\scr{P}^{(p)}_C$ with $\scr{P}^{(p)}_C \ni \eta_n \stackrel{n \rt \infty}\lRT \tilde \eta$, then $\mathcal{W}_{p'}(\eta_{n}, \eta) \stackrel{n \rt \infty}\Rt 0$ for any $1\leq p'<p$. In particular, $ \scr{P}^{(p)}_C$ is also compact in $(\mathcal{P}(\R^{d'}), \mathcal{W}_{p'})$ for any $1\leq p'<p$.
	\end{lemma}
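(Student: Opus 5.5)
The proof has three parts, in this order: tightness together with closedness to get weak compactness, then uniform integrability to upgrade weak convergence to $\mathcal{W}_{p'}$-convergence, and finally sequential compactness in $\mathcal{W}_{p'}$.

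\emph{Weak compactness.} Tightness comes from Markov's inequality: for every $\eta \in \scr{P}^{(p)}_C$ and $R>0$,
\[
\eta(\{\|y\|>R\}) \le \frac{\mfk{m}_p(\eta)}{R^p} \le \frac{C}{R^p},
\]
uniformly in $\eta$. By Prokhorov's theorem, $\scr{P}^{(p)}_C$ is then relatively compact in the weak topology. For closedness, take $\eta_n \in \scr{P}^{(p)}_C$ with $\eta_n \lRT \tilde\eta$. The map $y \mapsto \|y\|^p$ is continuous, nonnegative and lower semicontinuous, so the portmanteau theorem (equivalently, approximating by the bounded continuous functions $\|y\|^p\wedge M$ and letting $M\to\infty$ by monotone convergence) gives
\[
\mfk{m}_p(\tilde\eta) \le \liminf_n \mfk{m}_p(\eta_n) \le C.
\]
Hence $\tilde\eta \in \scr{P}^{(p)}_C$, and the set is weakly compact. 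I read the $\eta$ in the statement's conclusion as this limit $\tilde\eta$.

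\emph{Upgrade to $\mathcal{W}_{p'}$.} I will use the standard characterization (Villani, Theorem 6.9): $\mathcal{W}_{p'}(\eta_n,\tilde\eta)\to 0$ if and only if $\eta_n \lRT \tilde\eta$ and the $p'$-th moments are uniformly integrable, i.e.
\[
\lim_{R\to\infty}\sup_n \int_{\|y\|>R}\|y\|^{p'}\,\eta_n(dy)=0.
\]
Uniform integrability follows from the $p$-moment bound, since for $\|y\|>R$ we have $\|y\|^{p'}\le R^{p'-p}\|y\|^p$. This gives
\[
\sup_n\int_{\|y\|>R}\|y\|^{p'}\,d\eta_n \le C R^{p'-p}\to 0,
\]
because $p'<p$. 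The same bound makes $\mfk{m}_{p'}(\eta_n)\to \mfk{m}_{p'}(\tilde\eta)$: truncate $\|y\|^{p'}$ at $R$ to get a bounded continuous function, apply weak convergence, and control the tails uniformly. The characterization then yields $\mathcal{W}_{p'}(\eta_n,\tilde\eta)\to 0$. $\mathcal{W}_{p'}$ is finite on the set since $p'<p$, and $p'\ge 1$ is only needed to have the standard metric setting.

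\emph{Compactness in $\mathcal{W}_{p'}$.} Take any sequence in $\scr{P}^{(p)}_C$. By weak compactness it has a subsequence converging weakly to some $\tilde\eta\in\scr{P}^{(p)}_C$, and by the previous step that subsequence converges in $\mathcal{W}_{p'}$. So the set is sequentially compact in the metric space $(\mathcal{P}_{p'}(\R^{d'}),\mathcal{W}_{p'})$, hence compact.

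The main obstacle is the second step. If I avoid quoting Villani's theorem, I would argue directly. By Skorokhod representation, realize $Y_n\sim\eta_n$ and $Y\sim\tilde\eta$ with $Y_n\to Y$ almost surely. The family $\{\|Y_n-Y\|^{p'}\}$ is uniformly integrable, because
\[
\sup_n \EE\|Y_n-Y\|^{p} \le 2^{p-1}\cdot 2C < \infty
\]
and $p'<p$. Vitali's theorem then gives $\EE\|Y_n-Y\|^{p'}\to 0$, which bounds $\mathcal{W}_{p'}^{p'}(\eta_n,\tilde\eta)$ and finishes the proof.
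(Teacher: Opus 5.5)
Your proposal is correct and follows the same route as the paper, which simply cites Prokhorov's theorem together with Villani's Theorem 6.9. You fill in the details the paper omits: Markov-inequality tightness, closedness via lower semicontinuity of $\mfk{m}_p$, and the uniform tail bound $CR^{p'-p}$ for the $p'$-moments. Your reading of $\eta$ as $\tilde\eta$ is the intended one, and the Skorokhod--Vitali fallback is also valid and makes the argument self-contained.
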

	\begin{proof}[Proof of Lemma \ref{lem:cmpt-wass}]
		This is a standard consequence of Prokhorov's theorem together with bounded 
		$p$-moment compactness in Wasserstein space; see 
		\cite[Theorem 6.9]{villani2009optimal}.
	\end{proof}
	\begin{proof}[Proof of Theorem \ref{thm:pen-mle-conv}]
		The following observation is crucial to the proof. Suppose $\{\eta_n\} \subset \scr{P}^{(p)}_C $ and for some $p' \in [1,p]$\begin{align}
			\label{eq:conv-eta}
			\mathcal{W}_{p'}(\eta_n,\eta) \stackrel{n \rt \infty} \Rt 0.
		\end{align}
		
		\begin{itemize}[leftmargin=*]
			\item Notice that \eqref{eq:lip-b-cond} implies that for each $x$, $b(x, \cdot)$ satisfies the following growth condition: $b(x, y) = O(\|y\|^{q_0+1})$. Hence, if $p' \geq q_0+1$, then for any $x \in \R^d$,
			$$\bar b_{\eta_n}(x) \equiv \int b(x,y)\eta_n(dy) \stackrel{n \rt \infty} \Rt \bar b_{\eta} (x) \equiv \int b(x,y)\eta_n(dy),$$
			whence it follows that for $q_0+1\leq p'<p$ 
			\begin{align}
				\label{eq:conv-like}
				\lim_{n\rt \infty}\ln L_T(\bar b_{\eta_n} \mid  \bx_{0:M_0})  =\ln L_T(\bar b_{\eta} \mid  \bx_{0:M_0}),  \quad \text{ and } \  \liminf_{n \rt \infty} \loss_{\lambda}(\eta_n) \geq \loss_{\lambda}(\eta) 
.			\end{align}
			The second part of \eqref{eq:conv-like} uses lower semicontinuity of the map $\eta \rt \mfk{m}_p(\eta).$ Now observe that if $\{\lambda_n\}$ is a sequence such that $\lambda_n \rt 0$, then the first part of \eqref{eq:conv-like} and the fact that $\sup_n \mfk{m}_p(\eta_n) \leq C$ shows that 
			\begin{align}
				\label{eq:loss-conv-lambda}
				\lim_{n \rt \infty} \loss_{\lambda_n}(\eta_n) = -\ln L_T(\bar b_{\eta} \mid  \bx_{0:M_0})
.			\end{align}
			
			\item If in fact $\mathcal{W}_{p}(\eta_n,\eta) \stackrel{n \rt \infty} \Rt 0$ (i.e., $p'=p$), then $\mfk{m}_p(\eta_n) \rt \mfk{m}_p(\eta)$, and hence
			\begin{align}
				\label{eq:conv-loss}
				\lim_{n \rt \infty} \loss_{\lambda}(\eta_n) = \loss_{\lambda}(\eta). 
			\end{align}
			
			\item Suppose the convergence in \eqref{eq:conv-eta} holds for some $p/(p-q_0) \leq p'\leq p$. Then notice that $q_0p'/(p'-1) \leq p$, and hence
			$$\sup_n \mfk{m}_{q_0p'/(p'-1)}(\eta_n) \vee \mfk{m}_{q_0p'/(p'-1)}(\eta) < \infty.$$
			Now recalling that $C_b:\R^d \rt [0,\infty)$ in \eqref{eq:lip-b-cond} is locally bounded, we have from Lemma \ref{lem:cmpt-wass} for any compact set $K \subset \R^d$
			\begin{align}
				\label{eq:drft-conv}
				\begin{aligned}
					& \|\bar b_{\eta_n} - \bar b_{\eta}\|_K \equiv \sup_{x \in K}|\bar b_{\eta_n}(x) -\bar b_{\eta}(x)|\\
					& \ \leq 3^{1/p'} \sup_{x \in K} C_b(x)  \lf(1+ \sup_n\mfk{m}_{q_0p'/(p'-1)}(\eta_n)+ \mfk{m}_{q_0p'/(p'-1)}(\eta)\ri)^{(p'-1)/p'}\!\!\!\! \!\!\!\mathcal{W}_{p'}(\eta_n,\eta)\ \stackrel{n \rt \infty} \Rt 0.
				\end{aligned}
			\end{align}

		\end{itemize}
		
		We next establish the following claim.
		
		{\em Claim:} Fix any  $\hat \rho_{\lambda} \in \mathcal{A}_\lambda \dfeq \argmin_{\rho \in \mathcal{P}^{(p)}(\R^{d'})}  \loss_{\lambda}(\rho)$. Then for any $1\leq p' < p$, $\lim_{\lambda \rt 0} \mathcal{W}_{p'}(\hat{\rho}_\lambda,\mathcal{A}) =0.$ Furthermore, for any compact set $K \subset \R^d$, $\lim_{\lambda \rt 0} \lf\|\bar b_{\hat{\rho}_\lambda}- \mathcal{M}_{\mrm{mle}}\ri\|_K=0.$
		
		It is easy to see that $\mathcal{A}_\lambda \neq \emptyset.$ Now notice that by the definition of $\hat\rho_*$ (see \eqref{eq:agmax-like}), $\hat \rho_{\lambda}$ 
		\begin{align}\label{eq:loss-ineq-0}
			-\ln L_T( \bar\drft_{\hat\rho_*} \mid \bx_{0:M_0}) \leq -\ln L_T( \bar\drft_{\hat\rho_\lambda} \mid \bx_{0:M_0}) \leq \loss_{\lambda}(\hat\rho_\lambda) \leq \loss_{\lambda}(\hat\rho_*) \equiv -\ln L_T( \bar\drft_{\hat\rho_*}\mid \bx_{0:M_0}) + \lambda \mfk{m}_p(\hat\rho_*).
		\end{align}
		This implies
		$$0 \leq \loss_{\lambda}(\hat\rho_\lambda)+\ln L_T( \bar\drft_{\hat\rho_*}\mid \bx_{0:M_0}) \equiv \lf(\ln L_T( \bar\drft_{\hat\rho_*}\mid \bx_{0:M_0})-\ln L_T( \bar\drft_{\hat\rho_\lambda}\mid \bx_{0:M_0})\ri)+ \lambda \mfk{m}_p(\hat\rho_\lambda) \leq \lambda \mfk{m}_p(\hat\rho_*).$$
		Since $\ln L_T( \bar\drft_{\hat\rho_*}\mid \bx_{0:M_0})-\ln L_T( \bar\drft_{\hat\rho_\lambda}\mid \bx_{0:M_0}) \geq 0$, it follows that $\lambda \mfk{m}_p(\hat\rho_\lambda) \leq \lambda \mfk{m}_p(\hat\rho_*)$, i.e., for any $\lambda>0$, $ \mfk{m}_p(\hat\rho_\lambda) \leq  \mfk{m}_p(\hat\rho_*).$ By Lemma \ref{lem:cmpt-wass}, the family $\{\hat\rho_\lambda\}$ is relatively compact in weak topology of $\mathcal{P}(\R^{d'})$ and $(\mathcal{P}(\R^{d'}), \mathcal{W}_{p'})$ for any $p'<p$.  If $\tilde \rho_*$ is one of its limit points and $\{\lambda_n\} \rt 0$ is a subsequence such that $\hat\rho_{\lambda_n} \stackrel{k\rt \infty} \lRT \tilde \rho_*$ then for any $p'<p$,
		\begin{align}\label{eq:rho-lambda-conv}
			\lim_{k\rt \infty}\mathcal{W}_{p'}\lf(\hat\rho_{\lambda_n}, \tilde \rho_*\ri) =0
		\end{align}
		Since, in particular,  \eqref{eq:rho-lambda-conv} holds for $q_0+1 \leq p' <p$, it follows by \eqref{eq:loss-conv-lambda} that $\lim_{n \rt \infty}\loss_{\lambda_n}(\hat\rho_{\lambda_n}) = -\ln L_T(\bar b_{\tilde \rho_*} \mid  \bx_{0:M_0})$. 
		Consequently, taking $\lambda_n \rt 0$ we get from \eqref{eq:loss-ineq-0}, 
		$$\ln L_T(\bar b_{\tilde \rho_*} \mid  \bx_{0:M_0}) = \ln L_T(\bar b_{\hat \rho_*} \mid  \bx_{0:M_0}) \ \implies \ \tilde \rho_* \in \mathcal{A} \ \implies \ \lim_{n \rt \infty} \mathcal{W}_{p'}(\hat\rho_{\lambda_n},\mathcal{A}) = 0.$$
		Thus $\bar b_{\tilde \rho_*} \in \mathcal{M}_{\mrm{mle}}$, and since \eqref{eq:rho-lambda-conv}, in particular,  holds for $p/(p-q_0) \leq p'< p$, we have by \eqref{eq:drft-conv}
		\begin{align*}
			\|\bar b_{\hat{\rho}_{\lambda_n}}- \mathcal{M}_{\mrm{mle}}\|_K \leq \sup_{x \in K}|\bar b_{\hat\rho_{\lambda_n}}(x) -\bar b_{\tilde \rho_*}(x)| \stackrel{n\rt \infty}\Rt 0.
		\end{align*}
		As these  limits hold independent of the choice of the specific subsequence $\{\lambda_n\}$, the claim follows.

		We next show that $\{q^{(n)}_{\hat \theta_{n,\lambda}}\}$ in Theorem \ref{thm:pen-mle-conv} is relatively compact in $(\mathcal{P}(\R^{d'}), \mathcal{W}_{p'})$ for any $p'<p$. 
		
		To this end, first notice that 
		$\ln L_T( \bar\drft_\rho \mid \bx_{0:M_0}) \leq C_0(\bx_{0:M_0})$, where the constant
		$C_0(\bx_{0:M_0}) \equiv -\lf(M_0d\ln(2\Delta \pi)/2+\f{1}{2}\sum_{m=0}^{M_0-1} \ln \lf(\det  A(x_m)\ri)\ri),$
		does not depend on $\rho$. It follows that
		\begin{align}\label{eq:loss-ineq-1}
			\loss_{\lambda}(\rho) \geq -C_0(\bx_{0:M_0}) + \l \mfk{m}_p(\rho) \geq  -C_0(\bx_{0:M_0}),
		\end{align}
		and hence 
		$$\loss^*_{\lambda} \dfeq  \inf_{\rho \in \mathcal{P}^{(p)}(\R^{d'})} \loss_{\lambda}(\rho) \in (-\infty,\infty).$$ 
		Fix an $\vep>0$, and let $\rho_{\lambda, \vep} \in \mathcal{P}^{(p)}(\R^{d'})$ be such that $\loss_{\lambda}(\rho_{\lambda, \ep}) \leq \loss^*_{\lambda}+\vep$. By Lemma \ref{lem:wass-density-flows}, there exists a sequence $\{\tilde \theta_{n,\lambda,\vep}\}$  such that
		$$\mathcal{W}_p\lf(q^{(n)}_{\tilde \theta_{n,\lambda,\vep}}, \rho_{\lambda, \vep}\ri) \stackrel{n \rt \infty}\Rt 0.$$
		It follows from \eqref{eq:conv-loss}, \eqref{eq:loss-ineq-1} and by the definition of $\hat \theta_{n,\lambda}$ (see \eqref{eq:NN-para-min}) that there exists $N_0$ such that for all $n \geq N_0$
		\begin{align} \label{eq:loss-lam-ineq}
			\begin{aligned}
				&-C_0(\bx_{0:M_0}) + \lambda \mfk{m}_p\lf(q^{(n)}_{\hat \theta_{n,\lambda}}\ri)\leq  \loss_{\lambda}\lf(q^{(n)}_{\hat \theta_{n,\lambda}}\ri) \leq \loss_{\lambda}\lf(q^{(n)}_{\tilde \theta_{n,\lambda, \vep}}\ri) \leq \loss^*_{\lambda}+2\vep\\
				& \implies  \mfk{m}_p\lf(q^{(n)}_{\hat \theta_{n,\lambda}}\ri) \leq \l^{-1}\lf(\loss^*_{\lambda}+2\vep+C_0(\bx_{0:M_0})\ri).
			\end{aligned}
		\end{align}
		By Lemma \ref{lem:cmpt-wass}, the sequence $\{q^{(n)}_{\hat \theta_{n,\lambda}}\}$ is relatively compact in weak topology of $\mathcal{P}(\R^{d'})$ and $(\mathcal{P}(\R^{d'}), \mathcal{W}_{p'})$ for any $p'<p$.  If $q_{\lambda,*}$ is one of its limit points and $\{q^{(n_k)}_{\hat \theta_{n_k,\lambda}} \}$ is a subsequence such that $q^{(n_k)}_{\hat \theta_{n_k,\lambda}} \stackrel{k\rt \infty} \lRT q_{\lambda,*}$ then for any $p'<p$
		\begin{align}
			\label{eq:wass-conv-flow-est}
			\lim_{k\rt \infty}\mathcal{W}_{p'}\lf(q^{(n_k)}_{\hat \theta_{n_k,\lambda}}, q_{\lambda,*}\ri) =0.
		\end{align}
		In particular, \eqref{eq:wass-conv-flow-est} holds for $q_0+1\leq p'< p$, and hence by \eqref{eq:loss-lam-ineq}, the second part of \eqref{eq:conv-like} and the definition of $\loss^*_{\lambda}$ that
		$$\loss^*_{\lambda} \leq \loss_\lambda(q_{\lambda,*}) \leq	\liminf_k \loss_\lambda \lf(q^{(n_k)}_{\hat \theta_{n_k,\lambda}}\ri) \leq \loss^*_{\lambda}+2\vep.
		$$
		Since $\vep$ is arbitrary, it follows $ \loss_\lambda(q_{\lambda,*}) = \loss^*_{\lambda}$, i.e., $q_{\lambda,*} \in \mathcal{A}_\lambda$. Next by using the fact that \eqref{eq:wass-conv-flow-est}, in particular, holds for $p/(p-q_0) \leq p'< p$, we have by \eqref{eq:drft-conv}
		\begin{align}\label{eq:b-bar-subseq-conv}
			\lf\|\bar b_{\hat \theta_{n_k,\lambda}} -\bar b_{q_{\lambda,*}}\ri\|_K \equiv \sup_{x \in K}\lf\|\bar b_{\hat \theta_{n_k,\lambda}}(x) -\bar b_{q_{\lambda,*}}(x)\ri\| \stackrel{n\rt \infty}\Rt 0,
		\end{align}
		where recall for notational simplicity, we denote $\bar b_{\hat \theta_{n,\lambda}}  \dfeq \bar b_{q^{(n)}_{\hat \theta_{n,\lambda}}}$ (see \eqref{eq:theta-est}).
		
		Now by the triangle inequality, 
		\begin{align*}
			\mathcal{W}_{p'}\lf(q^{(n)}_{\hat \theta_{n_k,\lambda}}, \mathcal{A}\ri) \leq&\  \mathcal{W}_{p'}\lf(q^{(n)}_{\hat \theta_{n_k,\lambda}}, q_{\lambda,*}\ri)+ \mathcal{W}_{p'}\lf(q_{\lambda,*}, \mathcal{A}\ri)\\
			\lf\|\bar b_{\hat \theta_{n_k,\lambda}}- \mathcal{M}_{\mrm{mle}}\ri\|_K \leq&\ \lf\|\bar b_{\hat \theta_{n_k,\lambda}} -\bar b_{q_{\lambda,*}}\ri\|_K + \lf\|\bar b_{q_{\lambda,*}}- \mathcal{M}_{\mrm{mle}}\ri\|_K.
		\end{align*}
		It follows from \eqref{eq:wass-conv-flow-est}, \eqref{eq:b-bar-subseq-conv} and the {\em claim} that
		$$\lim_{\lambda \rt 0} \lim_{k\rt \infty} \mathcal{W}_{p'}\lf(q^{(n)}_{\hat \theta_{n_k,\lambda}}, \mathcal{A}\ri) =0, \quad \lim_{\lambda\rt 0} \lim_{k\to\infty} \lf\|\bar b_{\hat \theta_{n_k,\lambda}} -\mathcal{M}_{\mrm{mle}}\ri\|_K =0.$$
		Since these limits are independent of choice of the specific subsequence $\{n_k\}$,  \eqref{eq:conv-rho-mle} and \eqref{eq:MLE-set-conv}  hold.
		
	\end{proof}
	\section{Neural network architecture}
	\label{sec:nn-architecture}
	
	We use normalizing flows at two levels. The first flow, \(f_\theta\), parameterizes the latent invariant distribution \(q_\theta=(f_\theta)_\#\rfdist\) appearing in the integral drift. The second flow, \(g_\vart\), parameterizes the variational distribution \(\rho_\vart=(g_\vart)_\#\rho_{\mathrm{ref}}\) over the parameters \(\theta\) of \(f_\theta\). Both maps are implemented using RealNVP affine coupling layers.
	
	\paragraph{RealNVP architecture.}
	Let \(h_\phi:\R^m\to\R^m\) denote a generic RealNVP flow with parameter \(\phi\). The map is written as a composition of \(K_{\mathrm{flow}}\) affine coupling layers,
	\begin{align*}
		y^0 = z, \qquad
		y^k = h_\phi^{(k)}(y^{k-1}), \qquad k=1,\ldots,K_{\mathrm{flow}},
	\end{align*}
	and \(h_\phi(z)=y^{K_{\mathrm{flow}}}\). At layer \(k\), the input is split as
	\[
	y^{k-1}=(y^{k-1}_a,y^{k-1}_b),
	\qquad
	m_a+m_b=m.
	\]
	The affine coupling transformation is
	\begin{align}
		\label{eq:realnvp-layer}
		y^k_a &= y^{k-1}_a, \nonumber\\
		y^k_b
		&=
		y^{k-1}_b
		\odot
		\exp\!\left(s_\phi^{(k)}(y^{k-1}_a)\right)
		+
		t_\phi^{(k)}(y^{k-1}_a),
	\end{align}
	where
	\[
	s_\phi^{(k)},t_\phi^{(k)}:\R^{m_a}\to\R^{m_b}
	\]
	are feed-forward neural networks, and \(\odot\) denotes componentwise multiplication. Alternating binary masks are used across layers so that all coordinates are transformed through the composition.
	
	The triangular structure of \eqref{eq:realnvp-layer} gives a tractable Jacobian determinant:
	\begin{align}
		\label{eq:realnvp-logdet}
		\log\left|
		\det \frac{\partial h_\phi(z)}{\partial z}
		\right|
		=
		\sum_{k=1}^{K_{\mathrm{flow}}}
		\sum_{i=1}^{m_b}
		s_\phi^{(k)}(y^{k-1}_a)_i .
	\end{align}
	Thus density evaluation and sampling are both efficient.

	\section{Training details}
	\label{subsec:training-details}
	
	All models are trained using the Adam optimizer with learning rate \(10^{-3}\). We do not use a learning-rate scheduler. To stabilize optimization, gradient norms are clipped at \(5.0\). Training is performed for \(100\) variational iterations. At each iteration, a minibatch of size \(B=500\) is sampled uniformly from the observed trajectory, and the minibatch log-likelihood is rescaled by \(M_0/B\) to approximate the full-data contribution to the ELBO.
	
	The ELBO is estimated using nested Monte Carlo sampling. We use \(K=100\) samples from the variational distribution \(\rho_\vart\) and \(L=100\) latent samples from \(\rfdist\) for each sampled parameter. To reduce memory usage, both Monte Carlo levels are evaluated in batchs: the parameter samples are processed with batch size \(20\), and the latent samples are processed with batch size \(25\).
	
	The latent transport map \(f_\theta\) is implemented as a RealNVP flow with \(2\) affine coupling layers and hidden dimension \(5\). The variational posterior flow \(g_\vart\) is implemented as a RealNVP flow with \(6\) affine coupling layers and hidden dimension \(256\). We use \(N=10\) latent particles, so that the latent dimension is
	\[
	D=N\cdot d .
	\]
	The kernel scale is fixed at \(\zeta=1.0\), the diffusion coefficient is fixed at \(\diffus=0.1\), and the Euler--Maruyama step size is \(\Delta=0.01\).
	
	For posterior evaluation, we use \(500\) samples from the variational distribution and \(1000\) latent samples per parameter sample. These computations are also performed using batching, with parameter batch size \(20\) and latent batch size \(100\).
	\section{Further experiments}
	\paragraph{Finite-time law comparison for non-ergodic dynamics: } We compare the finite-time marginal laws of the true averaged SDE and the learned averaged SDE.
	
	For each experiment, we generate \(L\) independent trajectories from both systems using the same Brownian noise realization. At every time point \(t_k\), this produces two empirical distributions consisting of \(L\) samples each. We then compare these distributions using the Kolmogorov--Smirnov statistic and the Wasserstein distance.
	
	Figures \ref{fig:law_t1_t15} show kernel density estimate (KDE) comparisons of the empirical laws at selected time instances for increasing numbers of sample paths \(L\). As \(L\) increases, the empirical distributions become more stable and the agreement between the learned averaged dynamics and the true averaged dynamics becomes more apparent.
	
	The comparisons demonstrate that the inferred drift reproduces not only individual sample-path behavior, but also the finite-time probabilistic structure of the averaged system.
	\begin{figure}[H]
		\centering
		
		\begin{minipage}{0.24\textwidth}
			\centering
			\includegraphics[width=\linewidth]{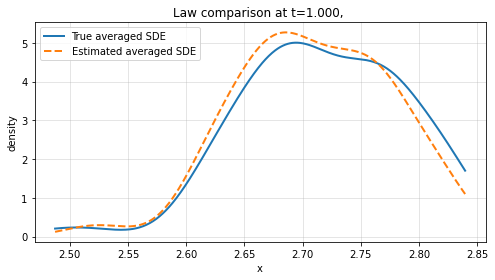}
			{\scriptsize $L=100$}
		\end{minipage}
		\hfill
		\begin{minipage}{0.24\textwidth}
			\centering
			\includegraphics[width=\linewidth]{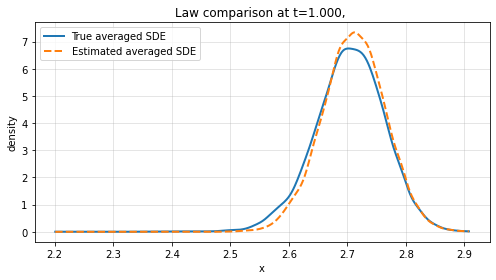}
			{\scriptsize $L=1000$}
		\end{minipage}
		\hfill
		\begin{minipage}{0.24\textwidth}
			\centering
			\includegraphics[width=\linewidth]{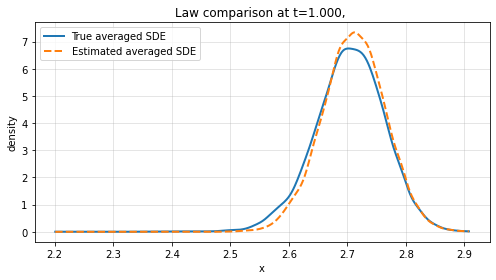}
			{\scriptsize $L=10000$}
		\end{minipage}
		\hfill
		\begin{minipage}{0.24\textwidth}
			\centering
			\includegraphics[width=\linewidth]{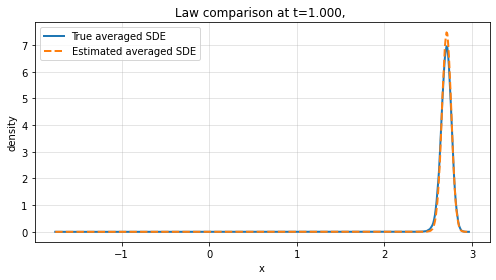}
			{\scriptsize $L=100000$}
		\end{minipage}
		
		\vspace{0.4cm}
		
		\begin{minipage}{0.24\textwidth}
			\centering
			\includegraphics[width=\linewidth]{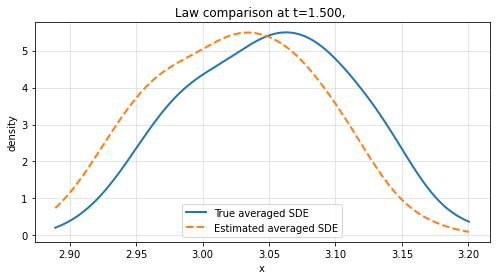}
			{\scriptsize $L=100$}
		\end{minipage}
		\hfill
		\begin{minipage}{0.24\textwidth}
			\centering
			\includegraphics[width=\linewidth]{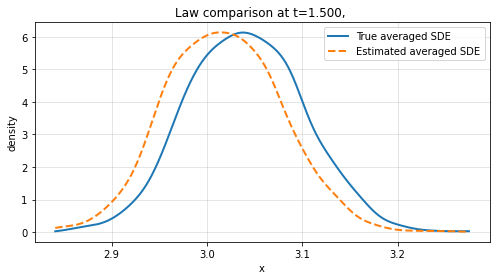}
			{\scriptsize $L=1000$}
		\end{minipage}
		\hfill
		\begin{minipage}{0.24\textwidth}
			\centering
			\includegraphics[width=\linewidth]{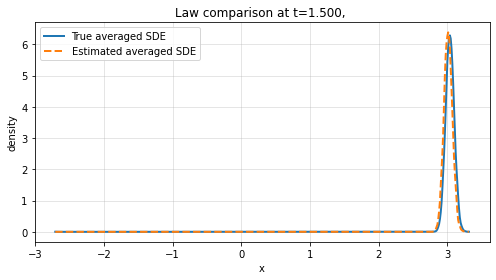}
			{\scriptsize $L=10000$}
		\end{minipage}
		\hfill
		\begin{minipage}{0.24\textwidth}
			\centering
			\includegraphics[width=\linewidth]{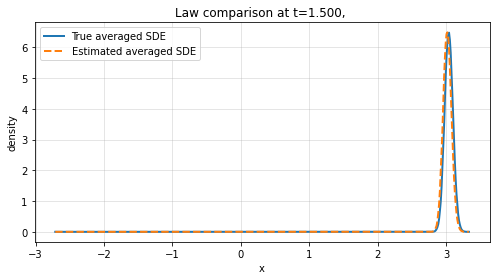}
			{\scriptsize $L=100000$}
		\end{minipage}
		
		\caption{Law comparison at $t=1$ and $t=1.5$ for different numbers of sample paths $L$.}
		\label{fig:law_t1_t15}
	\end{figure}
	\paragraph{Model :Variant of Double-well dynamics.}
	\paragraph{Double-well example:}We consider a slow--fast SDE system where the fast dynamics evolve with the invariant distribution given by a four-dimensional von Mises law. The slow component satisfies a nonlinear double-well type dynamics with drift{\setlength{\abovedisplayskip}{2pt}\setlength{\belowdisplayskip}{2pt}\begin{align*}b(x,y_1,y_2,y_3,y_4)=\frac{x-x^3}{1+x^2+y_1^2+\sin\!(1+y_2^2)+\log(1+y_3^2)+y_4^4}.\end{align*}}
	
	The cubic structure in the numerator induces a double-well behavior in the effective averaged dynamics, while the denominator introduces a highly nonlinear dependence on the fast variables.
	
	\paragraph{Experimental Design:}Synthetic observations are generated by simulating the corresponding multiscale slow--fast SDE system using Euler--Maruyama discretization over a fixed time horizon. The fast dynamics are simulated independently and subsequently coupled with the slow process through the above interaction drift. The averaged drift is then learned using the proposed variational flow framework.
	
	\paragraph{Comparison:}Figure~\ref{fig:nips_app_comparison} compares the true averaged drift with the estimated averaged drift obtained from the variational normalizing flow model. The learned drift accurately captures both the global double-well structure and the local nonlinear behavior near the origin, demonstrating that the proposed method is capable of recovering complicated averaged dynamics induced by nonlinear fast variables.
	The attached simulation code can be cited separately if needed. 
	\begin{figure}[t]
		\centering
		\includegraphics[width=0.35\textwidth]{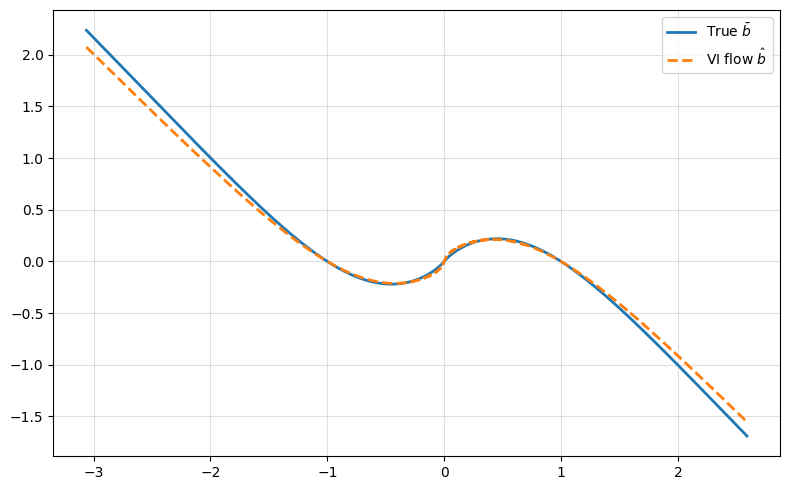}
		\caption{Comparison of the estimated drift and the averaged drift.}
		\label{fig:nips_app_comparison}
	\end{figure}
	\newpage

\end{document}